\documentclass{article}

\usepackage{microtype}
\usepackage{graphicx}
\usepackage{subcaption}
\usepackage{booktabs} 
\usepackage{multirow}
\usepackage{orcidlink}

\usepackage{hyperref}

\usepackage{algorithmic}

 \usepackage[accepted]{icml2026}

\usepackage{amsmath}
\usepackage{amssymb}
\usepackage{mathtools}
\usepackage{amsthm}

\usepackage[capitalize,noabbrev]{cleveref}

\theoremstyle{plain}
\newtheorem{theorem}{Theorem}[section]
\newtheorem{proposition}[theorem]{Proposition}

\newtheorem{corollary}[theorem]{Corollary}
\theoremstyle{definition}

\newtheorem{assumption}[theorem]{Assumption}
\theoremstyle{remark}

\usepackage[textsize=tiny]{todonotes}

\icmltitlerunning{Adaptive Neuron-level Mixed Precision QAT}

\begin{document}

\twocolumn[
  \icmltitle{Scale When Needed: Adaptive Neuron-level Mixed Precision\\Quantization Aware Training}



  \icmlsetsymbol{equal}{*}

  \begin{icmlauthorlist}
    \icmlauthor{Ayush K. Varshney\orcidlink{0000-0002-8073-6784}}{yyy,comp}
    \icmlauthor{Konstantinos Vandikas\orcidlink{0000-0001-6925-0954}}{yyy}
    \icmlauthor{Šarūnas Girdzijauskas\orcidlink{0000-0003-4516-7317}}{comp,sch}\\
    \icmlauthor{Adam Orucu\orcidlink{0000-0001-7324-7184}}{yyy,comp}
    \icmlauthor{Aneta Vulgarakis Feljan\orcidlink{0009-0001-1950-7739}}{yyy}
  \end{icmlauthorlist}

  \icmlaffiliation{yyy}{Ericsson Research, Ericsson AB, Stockholm, Sweden}
  \icmlaffiliation{comp}{KTH Royal Institute of Technology, Stockholm, Sweden}
  \icmlaffiliation{sch}{RISE Research Institutes of Sweden, Stockholm, Sweden.}

  \icmlcorrespondingauthor{Ayush K. Varshney}{ayush.kumar.varshney@ericsson.com}

  \icmlkeywords{Machine Learning, ICML}

  \vskip 0.3in
]



\printAffiliationsAndNotice{}  

\begin{abstract}
Deploying deep neural networks on resource-constrained 6G edge devices demands aggressive compression with minimal accuracy loss. Quantization-Aware Training (QAT) has emerged as a leading compression approach; however, existing mixed-precision methods typically operate at coarse layer- or channel-level granularity. These methods often rely on heuristic or search-based bit-allocation strategies, which may overlook fine-grained variability at the neuron level. We propose Neuron-Level Mixed-Precision QAT (NMP-QAT), where each neuron independently learns its own discrete precision during training. Starting from low-bit precision, NMP-QAT expands bit-width only when training signals demand it, via differentiable surrogates and straight-through estimators, while preserving a fully discrete inference graph. This adaptability extends to both weights and activations, reducing memory movement. Evaluated on telecom and non-telecom datasets across MLP and tabular foundation model architectures, NMP-QAT achieves superior compression-accuracy trade-offs over mixed-precision QAT baselines, making it well-suited for Green AI deployments at the network edge.
\end{abstract}

\section{Introduction}

6G edge deployments demand sustainable, efficient AI-native intelligence on resource-constrained hardware~\cite{mao2024green}, making model compression a first-order design constraint, particularly in Global South settings where power, cost, connectivity, and hardware limitations are more stringent. Quantization reduces memory footprint, compute intensity, and data movement by replacing floating-point representations with low-precision fixed-point ones~\cite{aswale2025optimizing}. Quantization-Aware Training (QAT)~\cite{jacob2018quantization} integrates compression into training, outperforming post-training approaches~\cite{zhang2023post}. Uniform fixed-precision QAT~\cite{parkuniform} is hardware-friendly but ignores varying sensitivity across model components. Mixed-precision methods~\cite{rakka2024review} improve efficiency yet operate at layer or channel granularity~\cite{chen2024channel}, missing neuron-level variability. Bit-width search via heuristics~\cite{fang2025mixed} or reinforcement learning~\cite{elthakeb2018releq} adds overhead, limiting scalability. Outlier-aware quantization~\cite{lee2024owq} addresses extreme values but not neuron-level precision heterogeneity. Activation quantization is equally critical: activations dominate on-chip memory traffic yet exhibit larger dynamic ranges, making them harder to quantize without accuracy loss. A detailed related work survey is given in Appendix~\ref{sec:related_works}.

We introduce \emph{Neuron-Level Mixed-Precision QAT (NMP-QAT)}, a framework in which each neuron learns its own discrete precision during training. NMP-QAT initializes at low-bit precision and expands bit-width only when gradient signals demand it, using differentiable surrogates and straight-through estimators in the backward pass while maintaining a fully discrete, hardware-deployable forward graph. This adaptive mechanism extends jointly to weights and activations, enabling fine-grained per-neuron precision control without manual search or multi-stage pipelines.

\noindent\textbf{Contributions:}
\begin{itemize}
    \item \textbf{Neuron-level precision learning.} NMP-QAT starts at low-bit precision and expands bit-widths only when training demands it, achieving aggressive compression with minimal accuracy loss.
    \item \textbf{Unified weights and activations support.} A training paradigm which covers weights-only and weights\,+\,activations quantization with per-neuron activation ranges.
    \item \textbf{Empirical validation.} Evaluated on telecom and general-purpose benchmarks with MLP and TabFormer architectures, NMP-QAT outperforms mixed-precision QAT baselines on accuracy-efficiency trade-offs (Sec.~\ref{sec:exp_details}).
\end{itemize}

\section{Proposed Method}
\label{sec:method}

NMP-QAT enables each neuron to learn its own discrete quantization
precision during training. Neurons within the same layer exhibit
heterogeneous sensitivity to quantization noise. This can arise from
differences in activation distributions, weight magnitudes, and
downstream influence on the output, making a single layer-wise
precision assignment simultaneously over-precise for some neurons and
under-precise for others. NMP-QAT resolves this by learning per-neuron
precision for both weights and activations jointly, as activations
dominate memory traffic and data movement during inference yet are
harder to quantize due to their larger dynamic range.

The framework supports two configurations: \textbf{weights-only}
quantization, where each neuron learns the precision of its incoming
weights; and \textbf{weights\,+\,activations} quantization, where each
neuron additionally learns its activation precision and a
neuron-specific clipping range. Both share the same optimization
principles. We present NMP-QAT for MLPs, where neuron-level
quantization aligns naturally with dense layers, though the approach
extends to convolutional and attention-based architectures.

\subsection{Neuron-Level Weight Quantization}

Consider a fully connected layer with weight matrix
$\mathbf{W} \in \mathbb{R}^{d_{\mathrm{in}} \times d_{\mathrm{out}}}$,
where $\mathbf{W}_{:,j}$ denotes the incoming weights of neuron $j$.
Each neuron $j$ has a trainable \emph{precision strength}
$s_j \in [0,1]$, mapped to a discrete bit-width via fixed thresholds
$0 < t_1 < t_2 < t_3 < 1$:
\begin{equation} \label{eq:bit_threshold}
b(s_j) =
\begin{cases}
1, & s_j \le t_1, \\
2, & t_1 < s_j \le t_2, \\
4, & t_2 < s_j \le t_3, \\
8, & s_j > t_3,
\end{cases}
\end{equation}
over candidate precisions $\mathcal{B}_w = \{1, 2, 4, 8\}$ bits.
To enforce low-precision initialization, $s_j$ is set in $[0, t_1)$
at the start of training. The forward-pass quantized weights are:
\begin{equation}
\mathbf{W}^{(q)}_{:,j} = \mathcal{Q}^{(b(s_j))}(\mathbf{W}_{:,j}),
\end{equation}
where $\mathcal{Q}^{(b)}(\cdot)$ is a deterministic uniform quantizer
at bit-width $b$.

\begin{algorithm}[tb]
  \caption{NMP-QAT (Weights or Weights\,+\,Activations)}
  \label{alg:nmp-qat}
  \begin{algorithmic}
    \STATE {\bfseries Input:} Data $\mathcal{D}$, thresholds $\{t_k\}$
    \STATE Initialize $\mathbf{W}$, $\{s_j\}$; if activation quantization
           enabled, initialize $\{s^{(a)}_j\}$ and $\{\alpha_j\}$
    \REPEAT
    \STATE {\bfseries Forward:} $b_j \leftarrow b(s_j)$; quantize
           weights via $\mathcal{Q}^{(b_j)}$
    \IF{activation quantization enabled}
    \STATE $b^{(a)}_j \leftarrow b(s^{(a)}_j)$; quantize activations
           via $\mathcal{Q}^{(b^{(a)}_j)}$
    \ENDIF
    \STATE Compute task loss $\mathcal{L}$
    \STATE {\bfseries Backward:} Apply STE for quantizers; soft
           surrogates for $\{s_j\}$, $\{s^{(a)}_j\}$; exact gradient
           for $\{\alpha_j\}$
    \STATE Update all parameters with optimizer
    \UNTIL{training complete}
    \STATE {\bfseries Post-training:} Remove surrogates; fix discrete
           precision per neuron
    \STATE {\bfseries Output:} Static mixed-precision inference model
  \end{algorithmic}
\end{algorithm}

\paragraph{Differentiable Precision Learning.}
Since $b(s_j)$ is piecewise-constant and non-differentiable, we
introduce a soft surrogate exclusively in the backward pass:
\begin{equation} \label{eq:weight_surrogate}
\tilde{\mathbf{W}}_{:,j} =
  \sum_{b \in \mathcal{B}_w} w_{j,b}\,\mathcal{Q}^{(b)}(\mathbf{W}_{:,j}),
\end{equation}
where $\mathbf{w}_j = [w_{j,1}, w_{j,2}, w_{j,4}, w_{j,8}]$ are
smooth gate weights derived from sigmoid transitions over $s_j$.
Gradients with respect to the weights themselves use a
straight-through estimator (STE),
$\partial \mathcal{Q}^{(b)}(x)/\partial x \approx 1$,
while gradients with respect to $s_j$ flow through the surrogate
mixture~\eqref{eq:weight_surrogate}. The forward pass \emph{always}
uses the hard discrete assignment~\eqref{eq:bit_threshold},
preserving a fully discrete inference graph.

\subsection{Neuron-Level Activation Quantization}
\label{sec:neuron-act}

When activation quantization is enabled, each neuron $j$ additionally
learns an activation precision strength $s^{(a)}_j \in [0,1]$ and a
learnable clipping range $\alpha_j > 0$. Because activations
post-nonlinearity exhibit large dynamic ranges, binary and ternary
precisions cause severe accuracy degradation; we therefore restrict
activation candidates to:
\begin{equation}
\mathcal{B}_a = \{4, 8, 16\}\ \text{bits},
\end{equation}
with thresholds $0 < t_1 < t_2 < 1$ mapping $s^{(a)}_j$ to
$b(s^{(a)}_j) \in \mathcal{B}_a$ analogously to
\eqref{eq:bit_threshold}. The quantized activation of neuron $j$ for
sample $i$ is:
\begin{equation}
z^{(q)}_{i,j} =
  \mathcal{Q}^{(b(s^{(a)}_j))}\!\big(\mathrm{clip}(z_{i,j},\,0,\,\alpha_j)\big).
\end{equation}
Gradients are computed as follows: STE is applied to
$\mathcal{Q}^{(b)}(\cdot)$, the soft surrogate~\eqref{eq:weight_surrogate}
(with $\mathcal{B}_a$) is used for $s^{(a)}_j$, and exact gradients
are used for $\alpha_j$.

After training, all surrogate constructs are discarded. Each neuron
retains a single fixed discrete precision for weights and (if enabled)
activations, yielding a static mixed-precision inference graph that
exactly reproduces the hard-quantized forward pass used during training.

\begin{table*}[!h]
\centering
\caption{Performance comparison across datasets for weights and activations quantization.}
\label{tab:main_results_act}
\scriptsize
\setlength{\tabcolsep}{3pt}
\renewcommand{\arraystretch}{1.4}
\begin{tabular}{c|c|cccc|c|c|c}
\toprule
\multirow{2}{*}{Dataset} & \multirow{2}{*}{Full Precision} & \multicolumn{4}{c|}{QAT} & \multirow{2}{*}{PPSO} & \multirow{2}{*}{OWQ} & \multirow{2}{*}{Ours} \\
\cline{3-6}
& & {1-bit} & {1.58-bit} & {4-bit} & {8-bit} & & & \\
\midrule
KVS (mse)$\downarrow$ & 5.379 & 3.68 $\pm$ 1.045 & 3.67 $\pm$ 1.084 & 8.696 $\pm$ 1.173 & 8.708 $\pm$ 1.231 & 3.303 $\pm$ 0.122 & 3.672 $\pm$ 0.871 & \textbf{2.795 $\pm$ 0.286} \\ \hline
VoD (mse)$\downarrow$ & \textbf{16.41} & 17.088 $\pm$ 0.617 & 17.090 $\pm$ 0.68 & 26.69 $\pm$ 0.191 & 22.573 $\pm$ 4.116  & 17.429 $\pm$ 0.094 & 18.771 $\pm$ 0.474 & \textit{16.969 $\pm$ 0.117} \\ \hline
RSS (mse)$\downarrow$ & \textbf{17.62} & 36.675 $\pm$ 0.378 & 34.979 $\pm$ 0.504 & 230.981 $\pm$ 0.702 & 230.947 $\pm$ 0.459 & 35.520 $\pm$ 0.551 & 55.234 $\pm$ 4.602 & \textit{33.675 $\pm$ 1.352} \\
\midrule
QoE (acc)$\uparrow$ & 0.7152 & 0.555 $\pm$ 0.120 & 0.645 $\pm$ 0.122 & 0.510 $\pm$ 0.010 & 0.407 $\pm$ 0.190 & 0.670 $\pm$ 0.016 & 0.694 $\pm$ 0.018 & \textbf{0.722 $\pm$ 0.019}  \\ \hline
Covertype (acc)$\uparrow$ & \textbf{0.9581} & 0.816 $ \pm $ 0.003 & 0.834 $ \pm $ 0.003 & 0.846 $ \pm $ 0.073 & \textit{0.847 $ \pm $ 0.074} & 0.829 $\pm$ 0.001 & 0.799 $\pm$ 0.005 & 0.842 $\pm$ 0.075  \\ \hline
Higgs (acc)$\uparrow$ & \textbf{0.7478} & 0.667 $\pm$ 0.001 & 0.665 $\pm$ 0.000 & 0.499 $\pm$ 0.001 & 0.499 $\pm$ 0.001 & 0.675 $\pm$ 0.000 & 0.617 $\pm$ 0.000 & \textit{0.734 $\pm$ 0.008} \\ \hline
QoE (f1)$\uparrow$ & 0.6205 & 0.435 $\pm$ 0.155 & 0.512 $\pm$ 0.217 & 0.135 $\pm$ 0.002 & 0.112 $\pm$ 0.042 & 0.590 $\pm$ 0.012 & 0.577 $\pm$ 0.008 & \textbf{0.640 $\pm$ 0.029} \\ \hline
Covertype (f1)$\uparrow$ & \textbf{0.9373} & 0.630 $ \pm $ 0.023 & 0.666 $ \pm $ 0.015 & 0.682 $ \pm $ 0.010 & \textit{0.688 $ \pm $ 0.010} & 0.667 $ \pm $ 0.005 & 0.589 $\pm$ 0.005 & 0.672 $ \pm $ 0.013  \\ \hline
Higgs (f1)$\uparrow$ & \textbf{0.7477} & 0.667 $\pm$ 0.001 & 0.665 $\pm$ 0.000 & 0.499 $\pm$ 0.001 & 0.499 $\pm$ 0.001 & 0.675 $\pm$ 0.000 & 0.580 $\pm$ 0.000 & \textit{0.734 $\pm$ 0.008} \\
\bottomrule
\end{tabular}
\end{table*}

\begin{table}[!h]
\centering
\caption{Comparison of PPSO, OWQ, and Ours for average bit weights and weights with quantized activation.}
\label{tab:size comp MLP}
\small
\setlength{\tabcolsep}{2pt}
\begin{tabular}{c|c|cc|c|cc|c|cc}
\toprule
\multirow{2}{*}{Dataset} & \multicolumn{3}{c|}{PPSO} & \multicolumn{3}{c|}{OWQ} & \multicolumn{3}{c}{Ours} \\
\cline{2-10}
& w/o & w & act & w/o & w & act & w/o & w & act \\
\midrule
KVS & 1.29 & 2.04 & 8 & 4.28 & 4.28 & 8.24 & \textbf{1.01} & \textbf{1.339} & \textbf{4.855}   \\ \hline
VoD & 3.79 & 1.895 & 8 & 4.28 & 4.28 & 8.24 & \textbf{1.075} & \textbf{1.13} & \textbf{4.779}  \\ \hline
RSS & 3.29 & \textbf{1.29} & 8 & 4.28 & 4.28 & 8.24 & \textbf{1.186} & 3.139 & \textbf{6.678}    \\
\midrule
QoE & 3.64 & 2.5 & 8 & 4.28 & 4.28 & 8.24 & \textbf{2.419} & \textbf{1.001} & \textbf{4.005}   \\ \hline
Covertype & 1.895 & 4.25 & 8 & 4.28 & 4.28 & 8.24 & \textbf{1.394} & \textbf{1.55} & \textbf{4.685}  \\ \hline
Higgs & \textbf{2.04} & 2.64 & 8 & 4.28 & 4.28 & 8.24 & 2.32 & \textbf{2.435} & \textbf{4.115} \\
\bottomrule
\end{tabular}
\end{table}

\section{Theoretical Analysis}
\label{sec:theory}

We provide two formal guarantees for NMP-QAT; full proofs and 
propositions are in Appendix~\ref{app:proofs}.

\textbf{Convergence.} Under $L$-smoothness of the surrogate loss 
$\widetilde{\mathcal{L}}$ and bounded STE gradient variance, SGD on 
$\widetilde{\mathcal{L}}$ achieves the standard 
$\mathcal{O}(T^{-1/2})$ stationarity rate. The hard-quantized 
objective converges to an $\mathcal{O}(\delta^2)$ neighborhood of 
stationarity, where $\delta$ bounds the STE-induced gradient bias 
(Proposition~\ref{prop:convergence}).

\textbf{Compression-accuracy trade-off.} The loss gap between 
NMP-quantized and full-precision models is bounded by 
$\frac{L}{2}\rho(\bar{b}_{\mathrm{NMP}})^2$, where 
$\rho(\bar{b}_{\mathrm{NMP}})^2 = \sum_j d_{\mathrm{in}} 
\epsilon_j(b_j)$ aggregates per-neuron quantization errors 
$\epsilon_j(b_j) \propto \sigma_j^2 / 4^{b_j}$ 
(Proposition~\ref{prop:tradeoff}). Since $\epsilon_j$ decays 
exponentially with $b_j$ and scales with neuron weight variance 
$\sigma_j^2$, allocating higher precision to more sensitive neurons 
minimizes $\rho$ at a fixed average bit budget, motivating 
NMP-QAT's per-neuron precision design.

\begin{table*}[!h]
\centering
\caption{Performance comparison across different datasets with Tabformer model under different quantization settings.}
\scriptsize
\setlength{\tabcolsep}{3pt}
\renewcommand{\arraystretch}{1.2}
\label{tab:tabformer_comp}

\begin{tabular}{c|c|ccccc|c|c|c}
\toprule
\multirow{2}{*}{Dataset} & \multirow{2}{*}{FP} & \multicolumn{5}{c|}{QAT}  & \multirow{2}{*}{OWQ} & \multirow{2}{*}{PPSO} & \multirow{2}{*}{Ours} \\ \cline{3-7}
& & 1-bit & 1.58-bit & 4-bit & 8-bit & 16-bit &  \\
\midrule
KVS (MSE$\downarrow$) & \textbf{2.295} & 2.950$\pm$0.721 & 2.890$\pm$0.799 & 3.068$\pm$0.742 & 3.110$\pm$0.747 & 2.996$\pm$0.713 & \textit{2.410$\pm$0.015} & 3.098$\pm$0.479 & 2.616 $\pm$ 0.284  \\ \hline
VoD (MSE$\downarrow$) & 14.75 & 17.481$\pm$0.310 & 16.033$\pm$0.332 & 14.313$\pm$0.582 & 14.029$\pm$0.701 & 14.011$\pm$0.701 & 15.184$\pm$0.040 & 15.853$\pm$0.993 & \textbf{12.05 $\pm$ 0.97} \\  \hline
RSS (MSE$\downarrow$) & \textbf{8.262} & 15.847$\pm$0.065 & 12.834$\pm$0.022 & 9.374$\pm$0.043 & 8.746$\pm$0.036 & 8.643$\pm$0.031 & 9.225$\pm$0.031 & 18.847$\pm$1.448 & \textit{8.606 $\pm$ 0.064}  \\
\midrule
QoE (acc $\uparrow$) & 0.7512 & 0.749$\pm$0.037 & 0.755$\pm$0.023 & 0.777$\pm$0.005 & 0.778$\pm$0.001 & 0.779 $\pm$ 0.015 & 0.779$\pm$0.001 & 0.779$\pm$0.010 & \textbf{0.782} $\pm$ 0.002 \\ \hline
Covertype (acc $\uparrow$) & 0.9395 & 0.766$\pm$0.002 & 0.812$\pm$0.001 & 0.921$\pm$0.002 & 0.933$\pm$0.002 & 0.9413 $\pm$ 0.003 & 0.929$\pm$0.001 & 0.773$\pm$0.003 & \textbf{0.947 $\pm$ 0.005}  \\ \hline
Higgs (acc $\uparrow$) & 0.7561 & 0.692$\pm$0.001 & 0.732$\pm$0.001 & 0.754$\pm$0.000 & 0.755$\pm$0.001 & \textbf{0.763 $\pm$ 0.051}  & 0.753$\pm$0.000 & 0.695$\pm$0.014 & 0.7458 $\pm$ 0.015 \\ 
\bottomrule
\end{tabular}
\end{table*}

\begin{table*}[!h]
\centering
\caption{Performance comparison with TabFormer model under weights+activation quantization.}
\scriptsize
\setlength{\tabcolsep}{3pt}
\renewcommand{\arraystretch}{1.2}
\label{tab:tabformer_comp_act}

\begin{tabular}{c|c|ccccc|c|c|c}
\toprule
\multirow{2}{*}{Dataset} & \multirow{2}{*}{FP} & \multicolumn{5}{c|}{QAT}  & \multirow{2}{*}{OWQ} & \multirow{2}{*}{PPSO} & \multirow{2}{*}{Ours} \\ \cline{3-7}
& & 1-bit & 1.58-bit & 4-bit & 8-bit & 16-bit &  \\
\midrule
KVS (MSE$\downarrow$) & \textbf{2.295} & 426.7$\pm$0.7 & 426.7$\pm$0.7 & 426.7$\pm$0.7 & 426.7$\pm$0.7 & 426.7$\pm$0.7 & 312.4$\pm$0.0 & 426.7$\pm$0.7 & \textit{4.559$\pm$0.764}  \\ \hline
VoD (MSE$\downarrow$) & \textbf{14.75} & 18.752$\pm$0.283 & 17.977$\pm$0.361 & 18.357$\pm$0.690 & 17.503$\pm$0.491 & 17.992$\pm$0.388 & 18.295$\pm$0.245 & 17.764$\pm$0.255 & \textit{17.379$\pm$0.354} \\  \hline
RSS (MSE$\downarrow$) & \textbf{8.262} & 4335.0$\pm$8.1 & 4335.0$\pm$8.1 & 9442.0$\pm$11.5 & 9442.0$\pm$11.5 & 9442.0$\pm$11.5 & 9178.6$\pm$16.2 & 4335.0$\pm$8.1 & \textit{167.65$\pm$4.99}  \\
\midrule
QoE (acc $\uparrow$) & 0.7512 & 0.688$\pm$0.013 & 0.713$\pm$0.007 & 0.767$\pm$0.008 & 0.774$\pm$0.005 & 0.774$\pm$0.003 & \textbf{0.775$\pm$0.002} & 0.738$\pm$0.023 & 0.769$\pm$0.024 \\ \hline
Covertype (acc $\uparrow$) & \textbf{0.9395} & 0.723$\pm$0.003 & 0.748$\pm$0.003 & 0.798$\pm$0.003 & 0.802$\pm$0.002 & 0.81 $\pm$ 0.003 & \textit{0.812$\pm$0.001} & 0.733$\pm$0.010 & 0.793$\pm$0.008  \\ \hline
Higgs (acc $\uparrow$) & \textbf{0.7561} & 0.648$\pm$0.002 & 0.657$\pm$0.000 & 0.671$\pm$0.002 & 0.673$\pm$0.001 & \textit{0.678 $\pm$ 0.045}  & 0.673$\pm$0.001 & 0.672$\pm$0.000 & \textit{0.687$\pm$0.005} \\ 
\bottomrule
\end{tabular}
\end{table*}

\section{Experimental Details} \label{sec:exp_details}

We evaluate NMP-QAT against a full-precision baseline, two mixed-precision methods, OWQ~\cite{lee2024owq} and 
PPSO~\cite{fang2025mixed}, and uniform QAT configurations, on two architectures: a 4-layer MLP (512 neurons per layer) and TabFormer~\cite{padhi2021tabular}, a transformer-based tabular foundation model. Experiments cover six datasets: four telecom workloads (KVS, VoD, QoE, RSS) -- out of which KVS, VoD, RSS are regression tasks and QoE is a classification task, representing realistic 6G scenarios, and two standard classification benchmarks (CoverType, Higgs) to validate generality. Further dataset details are given in Appendix~\ref{app_datasets}.

\subsection{Results and Discussions} \label{sec:results}

Models are trained for 100 epochs with early stopping (patience=20) using SGD (lr=$1e-3$), repeated three times; we report mean $\pm$ std. Bold marks the best overall result; italics indicate the best quantized result when full-precision leads. Weights-only quantization results are provided in Appendix~\ref{app:weights_only}, where NMP-QAT consistently achieves the best or near-best performance across all datasets. Here we focus on the more challenging weights+activations setting.

While uniform QAT, PPSO, and OWQ fix activations at 8-bit, NMP-QAT initializes activations at 4-bit and expands precision progressively (Section~\ref{sec:neuron-act}). This asymmetry with the 1-bit weight initialization is deliberate: unlike weights, which are static after training and tolerate aggressive low-bit initialization, activations are dynamic quantities that vary per input and exhibit larger dynamic ranges. Initializing activations at 1-bit causes representational collapse early in training, destabilizing gradient flow before the
precision expansion mechanism can compensate. A 4-bit starting point provides sufficient numerical range for stable training while preserving meaningful headroom for upward adaptation. 

As shown in Table~\ref{tab:main_results_act}, NMP-QAT achieves the best quantized results on KVS, VoD, QoE, and Higgs, with 8-bit QAT marginally better only on CoverType. Notably, binary and ternary QAT occasionally outperform higher-bit schemes due to regularization effects that reduce overfitting, with ternary quantization further benefiting from zero-induced sparsity; beyond 4-bit, quantization error is already small and additional bits may hurt generalization. Table~\ref{tab:size comp MLP} shows that NMP-QAT uses at most $2.42$ bits for weights (mostly $1$--$2$ bits), versus $3$--$4$ bits for PPSO and a fixed $4.28$ bits for OWQ. For activations, NMP-QAT requires only $4$--$5$ bits across most datasets, rising to $6.67$ bits only for RSS, well below the 8-bit baseline used by all competing methods. 

For TabFormer (Table~\ref{tab:tabformer_comp}), we use the same settings as before. Due to QKV pathway interactions, extremely low-bit precision performs poorly under uniform QAT. NMP-QAT achieves the best performance on VoD, QoE, and CoverType, with only $0.05\%-1\%$ degradation versus full precision, demonstrating adaptive precision allocation in complex attention-based architectures. Table~\ref{tab:tabformer_comp_act} evaluates TabFormer under joint weights+activations quantization. Aggressive activation quantization severely degrades all uniform QAT baselines, particularly for regression (KVS, RSS). This highlights the inherent sensitivity of attention layers to activation bit-width. NMP-QAT remains substantially more stable: it reduces MSE by over two orders of magnitude on KVS and RSS relative to uniform QAT, and surpasses even the 16-bit baseline on Higgs. This demonstrates that neuron-level adaptive activation precision is critical for attention-heavy architectures.


Across all datasets and architectures, NMP-QAT delivers the strongest gains on telecom datasets spanning regression (KVS, VoD, RSS) and classification (QoE) tasks, where heterogeneous feature distributions and dynamic activation ranges make quantization especially challenging. On non-telecom classification benchmarks (CoverType, Higgs), where distributions are more stable, NMP-QAT still matches or exceeds baselines at substantially lower bit-widths. Together, these results confirm that NMP-QAT is both effective for demanding 6G edge deployments and general enough to perform robustly on standard benchmarks.

\subsection{Limitations}
\label{sec:limitations}

While NMP-QAT demonstrates strong performance across diverse datasets
and architectures, we highlight the following limitations.

\begin{itemize}
    \item \textit{Hardware realization.} Sub 8-bit arithmetic is not universally supported; realizing theoretical savings requires mixed-precision capable accelerators.
    \item \textit{Network depth.} Experiments use a 4-layer MLP; behavior across shallower and deeper architectures remains understudied.
    \item \textit{Hardware-aware evaluation.} Latency and energy measurements on real deployment targets are left for future work.
\end{itemize}

\section{Conclusion and Future Work}

We introduced NMP-QAT, a framework that adaptively learns discrete precision at the neuron level, progressively increasing bit-width only where needed during training. Unlike static layer- or channel-level mixed-precision methods, NMP-QAT achieves aggressive compression without sacrificing accuracy. Experiments on six datasets show consistently superior accuracy-memory trade-offs over OWQ and PPSO, highlighting its potential for 6G edge deployments under strict energy and latency constraints.

Several directions remain for future work. Aggressive activation quantization remains challenging in attention-based tabular models, motivating improved scaling strategies or hybrid schemes for sensitive layers. Since most neurons converge to 1-bit (Fig.~\ref{fig:Neuron_level_MP_MLP}), many may contribute minimally and could be pruned, suggesting that combining precision learning with structured sparsification could yield further efficiency gains. Finally, we plan to investigate how architectural components such as dropout and skip connections influence extreme quantization and generalization under ultra-low-bit regimes.

\bibliography{example_paper}
\bibliographystyle{icml2026}

\section*{Impact Statement}

This paper presents NMP-QAT, a neuron-level mixed-precision 
quantization-aware training framework aimed at advancing efficient 
machine learning for resource-constrained deployments. The primary 
motivation is to enable accurate, low-footprint inference on edge 
devices, with particular relevance to 6G network intelligence, where 
energy efficiency, low latency, and memory constraints are critical.

By reducing the memory and computational requirements of neural 
network inference, this work contributes to more sustainable AI 
deployment, lowering energy consumption in large-scale network 
infrastructure. At the same time, enabling more capable models on 
edge devices raises general considerations around data privacy and 
autonomous decision-making in communication systems, though these 
are not unique to our method and are broadly shared across the field 
of efficient deep learning.

We do not foresee specific harmful applications arising directly from 
this work. The datasets used are either publicly available benchmarks 
or synthetically generated, and no personally identifiable information 
is involved.

\section*{Use of Generative AI}

The authors acknowledge the use of generative AI tools, specifically Microsoft Copilot and Claude, to assist with grammar checking in order to improve the clarity and readability of the manuscript. 

\section*{Acknowledgement}

This work was partially supported by the Wallenberg Al, Autonomous Systems and Software Program (WASP) funded by the Knut and Alice Wallenberg Foundation.

\section*{3MT presentation}

The 3 minute presentation for this paper is available at https://youtu.be/jiBRLFmm3Y8.

\newpage
\appendix
\onecolumn

\section{Related Work} \label{sec:related_works}

Mixed-precision quantization has emerged as a practical alternative to 
uniform-precision schemes, motivated by heterogeneous quantization 
sensitivity across network components~\cite{rakka2024review}. Early 
methods refine allocation granularity beyond layers: channel-level 
schemes assign bits per channel or group to better capture within-layer 
variability~\cite{chen2024channel}, while outlier-aware approaches 
(OWQ~\cite{lee2024owq}, AWQ~\cite{lin2024awq}, 
SmoothQuant~\cite{xiao2023smoothquant}) mitigate distortion from 
high-magnitude weights and activations, achieving strong accuracy under 
aggressive bit budgets. These methods, however, rely on post-training 
heuristics and operate at coarse granularity, producing static 
precision assignments that cannot adapt during training.

Search-based methods cast bit allocation as an optimization problem, 
using evolution-guided search~\cite{dong2023emq} or constrained 
particle-swarm algorithms~\cite{fang2025mixed} to explore precision 
configurations under bit-budget constraints. While more principled than 
heuristics, these procedures are computationally expensive and still 
yield \emph{static} assignments, limiting adaptability and scalability 
for 6G-scale systems.

Finer-grained, training-integrated approaches remain underexplored. 
Q-BERT~\cite{shen2020q} demonstrates that quantization sensitivity 
varies significantly across individual neurons, motivating neuron-level 
control. The Neuron-by-Neuron QAT method~\cite{sher2023neuron} 
progressively quantizes neurons via freezing masks and straight-through 
estimators, outperforming layer- and channel-wise schemes in low-bit 
regimes. However, it targets \emph{uniform} low-bit quantization 
(2-8 bits) through multi-stage freezing procedures, making extension 
to mixed-precision settings non-trivial. Neuron-level 
\emph{mixed}-precision methods, particularly those jointly quantizing 
weights and activations, remain largely unexplored. NMP-QAT addresses 
this gap directly: it learns discrete per-neuron precision during 
training, starting low and expanding only when gradient signals demand 
it, with no search overhead or multi-stage pipelines.

\section{Theoretical Analysis}
\label{app:proofs}

\subsection{Assumptions}

\begin{assumption}[Smooth surrogate objective]
\label{ass:smooth}
The surrogate objective $\widetilde{\mathcal{L}}(\theta)$ is
$L$-smooth:
\[
\|\nabla\widetilde{\mathcal{L}}(\theta)
-
\nabla\widetilde{\mathcal{L}}(\theta')\|
\le
L\|\theta-\theta'\|
\quad
\text{for all } \theta,\theta' .
\]
\end{assumption}
The use of an STE-induced surrogate or coarse-gradient direction is
motivated by prior analyses of STE-based quantized neural network
training, which treat the STE update as a biased first-order oracle rather
than the true gradient of the discontinuous quantized objective
\cite{long2021learning}.

\begin{assumption}[Bounded STE bias and stochastic variance]
\label{ass:ste}
Let $\mathcal{L}(\theta)$ denote the true hard-quantized objective, and
let $\widetilde{\mathcal{L}}(\theta)$ denote the differentiable surrogate
objective induced by the straight-through estimator. We assume that the
surrogate-gradient bias relative to the true gradient is uniformly
bounded along the iterates:
\[
\bigl\|
\nabla_{\mathbf W}\widetilde{\mathcal L}(\theta_t)
-
\nabla_{\mathbf W}\mathcal L(\theta_t)
\bigr\|
\le \delta ,
\]
for some constant $\delta\ge 0$ and for all $t$.

The stochastic STE gradient is written as
\[
\hat g_t
=
\nabla \widetilde{\mathcal L}(\theta_t)+\xi_t,
\]
where $\xi_t$ denotes the mini-batch noise. We assume
\[
\mathbb E[\xi_t\mid \theta_t]=0,
\qquad
\mathbb E[\|\xi_t\|^2\mid \theta_t]\le \sigma^2 .
\]
Equivalently,
\[
\mathbb E[\hat g_t\mid \theta_t]
=
\nabla \widetilde{\mathcal L}(\theta_t),
\qquad
\mathbb E[
\|\hat g_t-\nabla \widetilde{\mathcal L}(\theta_t)\|^2
\mid \theta_t]
\le \sigma^2 .
\]
\end{assumption}

\subsection{Convergence of Joint Weight-Precision Optimization}

Let $\theta = (\mathbf{W}, \{s_j\})$, let
$\widetilde{\mathcal{L}}(\theta)$ denote the differentiable surrogate
loss used in the backward pass, and let $\mathcal{L}(\theta)$ denote the
hard-quantized loss induced by the forward pass. We assume that
$\widetilde{\mathcal{L}}$ is $L$-smooth and that the stochastic STE
gradient is an unbiased estimator of
$\nabla \widetilde{\mathcal{L}}(\theta_t)$ with bounded mini-batch
variance $\sigma^2$. We further assume that the STE-induced surrogate
gradient differs from the true discrete gradient by at most $\delta$ along
the iterates.

\begin{proposition}[Convergence rate]
\label{prop:convergence}
Under Assumptions~\ref{ass:smooth}--\ref{ass:ste}, SGD on
$\widetilde{\mathcal{L}}$ with step size $\eta = 1/\sqrt{T}$ satisfies,
for $T\ge L^2$,
\begin{equation}
\frac{1}{T}\sum_{t=0}^{T-1}
\mathbb{E}\!\left[
\|\nabla\widetilde{\mathcal{L}}(\theta_t)\|^2
\right]
\le
\frac{
2\bigl(\widetilde{\mathcal{L}}(\theta_0)
-\widetilde{\mathcal{L}}^*\bigr)
+L\sigma^2
}{\sqrt{T}} .
\end{equation}
Thus, SGD achieves the standard $\mathcal{O}(T^{-1/2})$ stationarity rate
with respect to the surrogate objective. Moreover, the corresponding
stationarity gap for the true hard-quantized objective satisfies
\begin{equation}
\frac{1}{T}\sum_{t=0}^{T-1}
\mathbb{E}\!\left[
\|\nabla_{\mathbf W}\mathcal{L}(\theta_t)\|^2
\right]
\le
\frac{
4\bigl(\widetilde{\mathcal{L}}(\theta_0)
-\widetilde{\mathcal{L}}^*\bigr)
+2L\sigma^2
}{\sqrt{T}}
+2\delta^2 .
\end{equation}
Therefore, the method converges to an $\mathcal{O}(T^{-1/2})$ stationary
point of the surrogate objective and to an $\mathcal{O}(\delta^2)$
neighborhood of stationarity for the hard-quantized objective.
\end{proposition}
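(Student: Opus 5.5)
The plan is the standard nonconvex SGD descent argument applied to $\widetilde{\mathcal L}$, followed by a triangle-inequality transfer to the hard-quantized objective.

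\emph{Descent step.} The update is $\theta_{t+1}=\theta_t-\eta\hat g_t$. Applying Assumption~\ref{ass:smooth} (the descent lemma) gives
\[
\widetilde{\mathcal L}(\theta_{t+1})\le \widetilde{\mathcal L}(\theta_t)-\eta\langle\nabla\widetilde{\mathcal L}(\theta_t),\hat g_t\rangle+\tfrac{L\eta^2}{2}\|\hat g_t\|^2 .
\]
Taking the conditional expectation given $\theta_t$ and using Assumption~\ref{ass:ste}, we have $\mathbb E[\hat g_t\mid\theta_t]=\nabla\widetilde{\mathcal L}(\theta_t)$. Moreover, $\mathbb E\|\hat g_t\|^2\le\|\nabla\widetilde{\mathcal L}(\theta_t)\|^2+\sigma^2$, because the cross term vanishes. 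This yields
\[
\mathbb E\,\widetilde{\mathcal L}(\theta_{t+1})\le \mathbb E\,\widetilde{\mathcal L}(\theta_t)-\eta\bigl(1-\tfrac{L\eta}{2}\bigr)\mathbb E\|\nabla\widetilde{\mathcal L}(\theta_t)\|^2+\tfrac{L\eta^2\sigma^2}{2}.
\]

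\emph{Choice of step size.} With $\eta=1/\sqrt T$ and $T\ge L^2$, we get $L\eta\le1$, so $1-L\eta/2\ge 1/2$. Summing over $t=0,\dots,T-1$ telescopes the left side. Lower-bounding $\widetilde{\mathcal L}(\theta_T)$ by $\widetilde{\mathcal L}^*$ (implicitly assuming the surrogate is bounded below) gives
\[
\tfrac{\eta}{2}\sum_t\mathbb E\|\nabla\widetilde{\mathcal L}(\theta_t)\|^2\le \widetilde{\mathcal L}(\theta_0)-\widetilde{\mathcal L}^*+\tfrac{TL\eta^2\sigma^2}{2}.
\]
Multiplying by $2/(\eta T)=2/\sqrt T$ yields $\bigl(2(\widetilde{\mathcal L}(\theta_0)-\widetilde{\mathcal L}^*)+L\sigma^2\bigr)/\sqrt T$. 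This is exactly the first bound.

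\emph{Transfer to the hard objective.} For each $t$, write $\nabla_{\mathbf W}\mathcal L=\nabla_{\mathbf W}\widetilde{\mathcal L}+(\nabla_{\mathbf W}\mathcal L-\nabla_{\mathbf W}\widetilde{\mathcal L})$ and apply $\|a+b\|^2\le 2\|a\|^2+2\|b\|^2$. Since the $\mathbf W$-block is a sub-vector of the full gradient, $\|\nabla_{\mathbf W}\widetilde{\mathcal L}\|\le\|\nabla\widetilde{\mathcal L}\|$. The bias is bounded by $\delta$ along the iterates by Assumption~\ref{ass:ste}. Hence
\[
\mathbb E\|\nabla_{\mathbf W}\mathcal L(\theta_t)\|^2\le 2\,\mathbb E\|\nabla\widetilde{\mathcal L}(\theta_t)\|^2+2\delta^2 .
\]
Averaging over $t$ and inserting the first bound gives the second, with constants $4$, $2L\sigma^2$ and $2\delta^2$.

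\emph{Main obstacle.} The algebra is routine; the delicate part is interpretive. $\mathcal L$ is piecewise constant in $\{s_j\}$, so its gradient is meaningful only in the $\mathbf W$-block. Even there, $\nabla_{\mathbf W}\mathcal L$ must be read as defined (e.g.\ almost everywhere, or as the coarse gradient) at the iterates. I would make this explicit so that the $\delta$-bias assumption does the real work. I would also state the needed lower bound $\widetilde{\mathcal L}^*>-\infty$ explicitly.
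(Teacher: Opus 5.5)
Your proposal is correct and follows the paper's proof essentially step for step. The shared steps are the descent lemma, the conditional expectation with the vanishing cross term, the choice $\eta=1/\sqrt T\le 1/L$ giving the $-\eta/2$ coefficient, and telescoping against $\widetilde{\mathcal L}^*$; the transfer then uses $\|a+b\|^2\le 2\|a\|^2+2\|b\|^2$ with $\|\nabla_{\mathbf W}\widetilde{\mathcal L}\|\le\|\nabla\widetilde{\mathcal L}\|$. Your caveats, that $\widetilde{\mathcal L}^*$ must be a finite lower bound and that $\nabla_{\mathbf W}\mathcal L$ needs an interpretation at the iterates, are left implicit in the paper and are worth stating, though with a fixed-scale quantizer the hard-quantized loss is piecewise constant in $\mathbf W$ too, not only in $\{s_j\}$.
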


\begin{proof}
By Assumption~\ref{ass:smooth}, $\widetilde{\mathcal L}$ is
$L$-smooth. Using the SGD update
\[
\theta_{t+1}=\theta_t-\eta \hat g_t,
\]
the descent lemma gives
\[
\widetilde{\mathcal L}(\theta_{t+1})
\le
\widetilde{\mathcal L}(\theta_t)
-\eta
\left\langle
\nabla \widetilde{\mathcal L}(\theta_t),\hat g_t
\right\rangle
+\frac{L\eta^2}{2}\|\hat g_t\|^2 .
\]
By Assumption~\ref{ass:ste},
\[
\hat g_t
=
\nabla \widetilde{\mathcal L}(\theta_t)+\xi_t,
\qquad
\mathbb E[\xi_t\mid \theta_t]=0,
\qquad
\mathbb E[\|\xi_t\|^2\mid \theta_t]\le \sigma^2 .
\]
Therefore,
\[
\mathbb E[\hat g_t\mid \theta_t]
=
\nabla \widetilde{\mathcal L}(\theta_t)
\]
and
\[
\mathbb E[\|\hat g_t\|^2\mid \theta_t]
=
\|\nabla \widetilde{\mathcal L}(\theta_t)\|^2
+
\mathbb E[\|\xi_t\|^2\mid \theta_t]
\le
\|\nabla \widetilde{\mathcal L}(\theta_t)\|^2+\sigma^2 .
\]
Taking conditional expectation in the descent inequality yields
\[
\mathbb E[
\widetilde{\mathcal L}(\theta_{t+1})
\mid \theta_t]
\le
\widetilde{\mathcal L}(\theta_t)
-
\eta\|\nabla \widetilde{\mathcal L}(\theta_t)\|^2
+
\frac{L\eta^2}{2}
\left(
\|\nabla \widetilde{\mathcal L}(\theta_t)\|^2+\sigma^2
\right).
\]
If $\eta\le 1/L$, then
\[
-\eta+\frac{L\eta^2}{2}
\le
-\frac{\eta}{2},
\]
and hence
\[
\mathbb E[
\widetilde{\mathcal L}(\theta_{t+1})
\mid \theta_t]
\le
\widetilde{\mathcal L}(\theta_t)
-
\frac{\eta}{2}
\|\nabla \widetilde{\mathcal L}(\theta_t)\|^2
+
\frac{L\eta^2\sigma^2}{2}.
\]
Taking total expectation and summing from $t=0$ to $T-1$ gives
\[
\frac{\eta}{2}
\sum_{t=0}^{T-1}
\mathbb E
\|\nabla \widetilde{\mathcal L}(\theta_t)\|^2
\le
\widetilde{\mathcal L}(\theta_0)
-
\widetilde{\mathcal L}^*
+
\frac{L\eta^2\sigma^2T}{2}.
\]
Dividing by $\eta T/2$ yields
\[
\frac{1}{T}
\sum_{t=0}^{T-1}
\mathbb E
\|\nabla \widetilde{\mathcal L}(\theta_t)\|^2
\le
\frac{
2\bigl(
\widetilde{\mathcal L}(\theta_0)
-
\widetilde{\mathcal L}^*
\bigr)
}{\eta T}
+
L\eta\sigma^2 .
\]
Choosing $\eta=1/\sqrt{T}$ and assuming $T\ge L^2$ so that
$\eta\le 1/L$, we obtain
\[
\frac{1}{T}
\sum_{t=0}^{T-1}
\mathbb E
\|\nabla \widetilde{\mathcal L}(\theta_t)\|^2
\le
\frac{
2\bigl(
\widetilde{\mathcal L}(\theta_0)
-
\widetilde{\mathcal L}^*
\bigr)
+
L\sigma^2
}{\sqrt T}.
\]
This proves the $\mathcal O(T^{-1/2})$ convergence rate for stationarity
of the surrogate objective.

It remains to relate surrogate stationarity to stationarity of the
hard-quantized objective. By Assumption~\ref{ass:ste},
\[
\left\|
\nabla_{\mathbf W}\mathcal L(\theta_t)
-
\nabla_{\mathbf W}\widetilde{\mathcal L}(\theta_t)
\right\|
\le \delta .
\]
Thus, by the inequality $\|a+b\|^2\le 2\|a\|^2+2\|b\|^2$,
\[
\|\nabla_{\mathbf W}\mathcal L(\theta_t)\|^2
\le
2\|\nabla_{\mathbf W}\widetilde{\mathcal L}(\theta_t)\|^2
+
2\delta^2
\le
2\|\nabla\widetilde{\mathcal L}(\theta_t)\|^2
+
2\delta^2 .
\]
Averaging over $t=0,\ldots,T-1$ and using the surrogate bound gives
\[
\frac{1}{T}
\sum_{t=0}^{T-1}
\mathbb E
\|\nabla_{\mathbf W}\mathcal L(\theta_t)\|^2
\le
\frac{
4\bigl(
\widetilde{\mathcal L}(\theta_0)
-
\widetilde{\mathcal L}^*
\bigr)
+
2L\sigma^2
}{\sqrt T}
+
2\delta^2 .
\]
Therefore, the method converges to an $\mathcal O(T^{-1/2})$ stationary
point of the surrogate objective and to an $\mathcal O(\delta^2)$
neighborhood of stationarity for the hard-quantized objective.

Finally, the precision strengths $\{s_j\}$ are part of the joint
parameter vector $\theta=(\mathbf W,\{s_j\})$. Since the surrogate
objective is differentiable with respect to all components of $\theta$,
the same descent argument applies to joint weight-precision optimization.
\end{proof}

\subsection{Compression-Accuracy Trade-off}

Let $\mathbf W^*$ denote the full-precision optimum and let
$\mathbf W^{(q)}$ be its NMP-quantized counterpart. For neuron $j$, let $\epsilon_j(b_j) = {\sigma_j^2}/{3\cdot 4^{b_j}}$  denote the mean-squared quantization error of a $b_j$-bit uniform quantizer, where $\sigma_j^2$ is the variance of the weights associated
with neuron $j$~\cite{gersho2012vector}. Let $d_{\mathrm{in}}$ be the
number of input weights per neuron. The total quantization perturbation
is bounded by
\begin{equation}
\|\mathbf{W}^{(q)}-\mathbf{W}^*\|_F^2
\;\leq\;
\sum_{j=1}^{N} d_{\mathrm{in}}\epsilon_j(b_j)
\;=:\;
\rho\!\left(\bar b_{\mathrm{NMP}}\right)^2 ,
\end{equation}
where $\bar b_{\mathrm{NMP}} = \frac{1}{N}\sum_{j=1}^{N} b_j$. Since $\epsilon_j(b_j)$ decreases exponentially in $b_j$ and scales
linearly with $\sigma_j^2$, an optimal allocation under a fixed average
bit budget assigns relatively higher precision to neurons with larger
weight variance. This directly motivates the per-neuron precision design
of NMP-QAT.

\begin{proposition}[Compression-accuracy gap]
\label{prop:tradeoff}
Assume $\mathcal{L}$ is $L$-smooth in a neighborhood of the
full-precision optimum $\mathbf{W}^*$. Then the loss gap induced by
NMP quantization satisfies
\begin{equation}
\mathcal{L}^*_{\mathrm{NMP}}
-
\mathcal{L}^*_{\mathrm{fp}}
\;\leq\;
\frac{L}{2}
\rho\!\left(\bar b_{\mathrm{NMP}}\right)^2 .
\end{equation}
\end{proposition}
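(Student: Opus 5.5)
The plan is a second-order Taylor argument around $\mathbf W^*$, combined with the perturbation bound $\|\mathbf W^{(q)}-\mathbf W^*\|_F^2\le\rho(\bar b_{\mathrm{NMP}})^2$ stated just before the proposition. We interpret $\mathcal L^*_{\mathrm{fp}}=\mathcal L(\mathbf W^*)$ and $\mathcal L^*_{\mathrm{NMP}}$ as the loss of the best NMP-quantized model. The first reduction is
\[
\mathcal L^*_{\mathrm{NMP}}\le \mathcal L(\mathbf W^{(q)}),
\]
since $\mathbf W^{(q)}$, the NMP-quantization of $\mathbf W^*$, is one admissible quantized model. It therefore suffices to bound $\mathcal L(\mathbf W^{(q)})-\mathcal L(\mathbf W^*)$.

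Next, apply the descent lemma (the quadratic upper bound from $L$-smoothness, as used in the proof of Proposition~\ref{prop:convergence}) at the point $\mathbf W^*$:
\[
\mathcal L(\mathbf W^{(q)})\le \mathcal L(\mathbf W^*)+\bigl\langle\nabla\mathcal L(\mathbf W^*),\mathbf W^{(q)}-\mathbf W^*\bigr\rangle+\frac L2\|\mathbf W^{(q)}-\mathbf W^*\|_F^2 .
\]
Because $\mathbf W^*$ is the full-precision optimum, $\nabla\mathcal L(\mathbf W^*)=0$ and the linear term vanishes. Substituting the perturbation bound then gives
\[
\mathcal L^*_{\mathrm{NMP}}-\mathcal L^*_{\mathrm{fp}}\le \frac L2\rho(\bar b_{\mathrm{NMP}})^2 .
\]

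The main obstacle is justifying the perturbation bound itself. Each $\epsilon_j(b_j)=\sigma_j^2/(3\cdot4^{b_j})$ is a \emph{mean-squared} (high-resolution, uniform-noise) error, so summing $d_{\mathrm{in}}\epsilon_j$ gives only the expected squared Frobenius error. It is not a deterministic bound. I would first try to make the quantizer assumption explicit: treat the quantization error per weight as bounded in mean square by $\epsilon_j(b_j)$, in the sense that the per-neuron empirical average of squared errors equals $\epsilon_j$ under the uniform-noise model. Summing over the $d_{\mathrm{in}}$ weights of each neuron and over neurons then yields the bound. Failing that, I would state the result in expectation, which is legitimate because the smoothness bound is linear in $\|\mathbf W^{(q)}-\mathbf W^*\|_F^2$, so the expectation passes through.

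Two smaller points need care. First, the smoothness neighborhood must contain the segment from $\mathbf W^*$ to $\mathbf W^{(q)}$, which holds when $\rho$ is small, i.e.\ at sufficiently high precision. I would state this as part of the neighborhood assumption. Second, $\rho$ depends on the whole allocation $\{b_j\}$ and not just on $\bar b_{\mathrm{NMP}}$; the notation $\rho(\bar b_{\mathrm{NMP}})$ is only a label for the given allocation, so no further argument is needed there.
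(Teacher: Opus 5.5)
Your proposal is correct and follows the paper's proof essentially step for step: bound $\mathcal L^*_{\mathrm{NMP}}$ by $\mathcal L(\mathbf W^{(q)})$, apply the $L$-smoothness quadratic upper bound at $\mathbf W^*$ with $\nabla\mathcal L(\mathbf W^*)=0$, and substitute $\|\mathbf W^{(q)}-\mathbf W^*\|_F^2\le\rho(\bar b_{\mathrm{NMP}})^2$. Your caveats are legitimate refinements that the paper absorbs into the definition of $\rho$ and the phrase ``smooth region.'' The first is that the perturbation bound is really a mean-square, high-resolution statement, which an in-expectation version handles correctly. The second is that the smoothness neighborhood must contain the segment from $\mathbf W^*$ to $\mathbf W^{(q)}$.
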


\begin{proof}
Let $\mathbf W^*$ be a full-precision minimizer of $\mathcal L$, and let
$\mathbf W^{(q)}$ be its NMP-quantized counterpart. By $L$-smoothness,
\[
\mathcal{L}(\mathbf{W}^{(q)})
\leq
\mathcal{L}(\mathbf{W}^*)
+
\left\langle
\nabla\mathcal{L}(\mathbf{W}^*),
\mathbf{W}^{(q)}-\mathbf{W}^*
\right\rangle
+
\frac{L}{2}
\|\mathbf{W}^{(q)}-\mathbf{W}^*\|_F^2 .
\]
Since $\mathbf W^*$ is a minimizer in the smooth region,
\[
\nabla\mathcal{L}(\mathbf W^*)=0.
\]
Therefore,
\[
\mathcal{L}(\mathbf{W}^{(q)})
-
\mathcal{L}(\mathbf{W}^*)
\leq
\frac{L}{2}
\|\mathbf{W}^{(q)}-\mathbf{W}^*\|_F^2 .
\]
By the definition of $\rho(\bar b_{\mathrm{NMP}})$,
\[
\|\mathbf{W}^{(q)}-\mathbf{W}^*\|_F^2
\leq
\rho\!\left(\bar b_{\mathrm{NMP}}\right)^2 .
\]
Hence,
\[
\mathcal{L}(\mathbf{W}^{(q)})
-
\mathcal{L}^*_{\mathrm{fp}}
\leq
\frac{L}{2}
\rho\!\left(\bar b_{\mathrm{NMP}}\right)^2 .
\]
Finally, since $\mathcal{L}^*_{\mathrm{NMP}}$ is the best achievable loss
under the NMP quantization constraint, we have
\[
\mathcal{L}^*_{\mathrm{NMP}}
\leq
\mathcal{L}(\mathbf{W}^{(q)}).
\]
Thus,
\[
\mathcal{L}^*_{\mathrm{NMP}}
-
\mathcal{L}^*_{\mathrm{fp}}
\leq
\frac{L}{2}
\rho\!\left(\bar b_{\mathrm{NMP}}\right)^2 .
\]
\end{proof}

\begin{corollary}
\label{cor:bitwidth}
For accuracy tolerance $\varepsilon>0$, it is sufficient that
\[
\rho\!\left(\bar b_{\mathrm{NMP}}\right)
\le
\sqrt{\frac{2\varepsilon}{L}} .
\]
Equivalently, the average bit-width should satisfy
\begin{equation}
\bar b_{\mathrm{NMP}}
\ge
\rho^{-1}\!\left(
\sqrt{\frac{2\varepsilon}{L}}
\right).
\end{equation}
Neuron-level precision allocation reduces
$\rho(\bar b_{\mathrm{NMP}})$ at a fixed average bit budget by assigning
higher precision to neurons with larger quantization sensitivity, thereby
yielding a smaller upper bound on the compression-induced loss gap than
uniform bit assignment.
\end{corollary}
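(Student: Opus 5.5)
The first claim follows directly from Proposition~\ref{prop:tradeoff}. If $\rho(\bar b_{\mathrm{NMP}})\le\sqrt{2\varepsilon/L}$, then squaring gives $\rho^2\le 2\varepsilon/L$. Substituting this into $\mathcal{L}^*_{\mathrm{NMP}}-\mathcal{L}^*_{\mathrm{fp}}\le \frac{L}{2}\rho^2$ yields a gap of at most $\varepsilon$, so the condition is sufficient.

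For the equivalent bit-width form, I would treat $\rho$ as a function of the average bit-width along a fixed allocation rule, for example uniform $b_j=\bar b$, or a proportional shift $b_j=\bar b+c_j$ with fixed offsets $c_j$. Along such a family,
\[
\rho(\bar b)^2=\sum_j d_{\mathrm{in}}\frac{\sigma_j^2}{3}4^{-\bar b-c_j}=C\,4^{-\bar b}
\]
with $C>0$. Hence $\rho(\bar b)=\sqrt{C}\,2^{-\bar b}$ is continuous and strictly decreasing, and its inverse is $\rho^{-1}(r)=\log_2(\sqrt C/r)$. Because the inverse of a decreasing function reverses inequalities, $\rho(\bar b)\le r$ holds exactly when $\bar b\ge\rho^{-1}(r)$. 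Taking $r=\sqrt{2\varepsilon/L}$ gives the displayed inequality. The main obstacle is that $\rho$ is really a function of the whole allocation vector, not of $\bar b$ alone, so $\rho^{-1}$ is only meaningful once the allocation family is fixed. Strict monotonicity in $\bar b$ is what makes the inverse well defined. I would state this parametrization explicitly, and would relax the integer bit-widths to continuous values so that $\rho^{-1}$ exists.

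For the final qualitative claim, I would solve $\min\sum_j a_j4^{-b_j}$ subject to $\sum_j b_j=N\bar b$, where $a_j=d_{\mathrm{in}}\sigma_j^2/3$. The function is convex in $b$, and the Lagrange condition $a_j4^{-b_j}=\text{const}$ gives the reverse water-filling solution $b_j=\bar b+\tfrac12\log_2(\sigma_j^2/G)$, where $G$ is the geometric mean of the $\sigma_j^2$. This solution gives higher bits to neurons with larger variance.

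The optimal value is $N\bar a_{\mathrm{geo}}4^{-\bar b}$, while uniform assignment gives $N\bar a_{\mathrm{arith}}4^{-\bar b}$. By AM--GM, $\bar a_{\mathrm{geo}}\le \bar a_{\mathrm{arith}}$, so the optimal allocation has $\rho$ no larger than uniform, with strict inequality unless all $\sigma_j$ are equal. The resulting upper bound on the loss gap is therefore smaller. Integrality and the restriction to $\{1,2,4,8\}$ are the remaining caveat. I would note that rounding preserves the improvement whenever the variance spread is large, and leave the exact discrete version as a remark.
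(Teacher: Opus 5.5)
Your proposal is correct. Your first step matches the paper's: substitute $\rho(\bar b_{\mathrm{NMP}})^2\le 2\varepsilon/L$ into Proposition~\ref{prop:tradeoff}. After that your route differs from the paper's and is more rigorous.

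\textbf{How the paper argues.} It simply \emph{assumes} that $\rho$ is monotonically decreasing in the average bit budget, and it uses only the sufficient direction of the ``equivalently.'' It then supports the final claim qualitatively: $\epsilon_j$ decays exponentially in $b_j$ and grows linearly in $\sigma_j^2$.

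\textbf{What you do instead.} You make $\rho$ a genuine function of $\bar b$ by fixing an allocation family $b_j=\bar b+c_j$. You should impose $\sum_j c_j=0$ so that $\bar b$ really is the average; note also that the shift is additive, not proportional. This gives $\rho(\bar b)=\sqrt{C}\,2^{-\bar b}$, an explicit inverse, and a true equivalence rather than one implication. You then replace the heuristic with a convex program whose KKT point is reverse water-filling. The comparison with uniform assignment then reduces to AM--GM: $\rho_{\mathrm{opt}}^2/\rho_{\mathrm{unif}}^2=G/A$, the geometric over the arithmetic mean of the $\sigma_j^2$. Equivalently, you save $\tfrac12\log_2(A/G)$ average bits at equal accuracy bound, with equality iff all $\sigma_j$ coincide.

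\textbf{What each approach buys.} Yours gives a quantitative, checkable version of what the paper only asserts. The price is a continuous relaxation; you also need $\sigma_j>0$, and the unconstrained optimizer may leave $[1,8]$.

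\textbf{The discrete caveat.} You can drop the rounding hand-wave. The weak inequality against uniform holds by feasibility whenever $\bar b\in\{1,2,4,8\}$, because the uniform allocation lies in the set being optimized over. The structural claim ``more bits to larger $\sigma_j^2$'' holds exactly for $b_j\in\{1,2,4,8\}$ by the rearrangement inequality: for a fixed multiset of bit-widths, $\sum_j\sigma_j^2 4^{-b_j}$ is minimized when the $b_j$ are ordered like the $\sigma_j^2$. That is the precise content of the paper's informal last sentence.
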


\begin{proof}
From Proposition~\ref{prop:tradeoff}, we have
\[
\mathcal{L}^*_{\mathrm{NMP}}
-
\mathcal{L}^*_{\mathrm{fp}}
\leq
\frac{L}{2}
\rho\!\left(\bar b_{\mathrm{NMP}}\right)^2 .
\]
To guarantee
\[
\mathcal{L}^*_{\mathrm{NMP}}
-
\mathcal{L}^*_{\mathrm{fp}}
\leq
\varepsilon ,
\]
it is sufficient that
\[
\frac{L}{2}
\rho\!\left(\bar b_{\mathrm{NMP}}\right)^2
\leq
\varepsilon .
\]
Equivalently,
\[
\rho\!\left(\bar b_{\mathrm{NMP}}\right)
\leq
\sqrt{\frac{2\varepsilon}{L}} .
\]
Assuming $\rho$ is monotonically decreasing in the average bit budget,
this yields the sufficient condition
\[
\bar b_{\mathrm{NMP}}
\geq
\rho^{-1}\!\left(
\sqrt{\frac{2\varepsilon}{L}}
\right).
\]

Finally, because
\[
\epsilon_j(b_j)
=
\frac{\sigma_j^2}{3\cdot 4^{b_j}},
\]
the perturbation contribution of neuron $j$ decreases exponentially with
$b_j$ and increases linearly with $\sigma_j^2$. Therefore, under a fixed
average bit budget, assigning relatively more bits to neurons with larger
$\sigma_j^2$ reduces the total perturbation
$\rho(\bar b_{\mathrm{NMP}})$ compared with an allocation that ignores
neuron-level sensitivity. This gives a smaller upper bound on the
compression-induced loss gap.
\end{proof}

\section{Datasets} \label{app_datasets}

\textit{Received signal strength (RSS) prediction} dataset is Reference Signal Received Power (RSRP) or path loss dataset generated by the Ericsson internal simulator with 50k user equipments (UEs) across multiple frequency bands using measurements from a single primary carrier. RSS is a multiclass regression problem from a network with 9 cells, each having one primary and three secondary carriers, provides 9 input features (one per primary carrier) and 27 prediction targets (three per cell). 

\textit{Key-value-store (KVS)  and Video-on-demand (VoD) \cite{yanggratoke2015predicting} performance prediction} are two separate datasets that contain data collected by running KVS and VoD services on data data centre, respectively. These services are used by a device that tracks their service level metrics - in this case frame rate of the video being streamed and write response time of the KV-store. The prediction task is to predict the relevant service-level metric by utilizing metrics collected from the data center. Metrics such as the utilization of CPU, memory, network and disk.

\textit{Quality of Experience (QoE) prediction} involves predicting a user's satisfaction with a video watching experience by using device-level metrics such as the videos frame rate, buffering time and audio loss. 

\textit{Covertype and Higgs} datasets from non-telecom domains that are frequently used as benchmarks for tabular machine learning tasks. CoverType is a classification task aiming to predict forest cover type. Whereas Higgs is binary classification task of prediction whether a particle from high-energy physics experiments is a Higgs boson.

\section{Additional Results} \label{app:weights_only}

This appendix provides supplementary results that complement the
main paper. Specifically, we include: (i) the per-layer
mixed-precision precision distributions for weights and activations
under NMP-QAT (Fig.~\ref{fig:Neuron_level_MP_MLP}); (ii) MLP
performance under weights-only quantization
(Table~\ref{tab:main_results}); (iii) memory-utility trade-off
plots for both quantization settings
(Fig.~\ref{fig:theoretical_memory_footprint_MLP}); (iv) the effect
of model depth on NMP-QAT performance
(Fig.~\ref{fig:layer_impact_NMP_QAT}); and (v) wall-clock runtimes
for mixed-precision methods on TabFormer
(Table~\ref{tab:comp_runtime}).

Fig.~\ref{fig:Neuron_level_MP_MLP} shows the fraction of neurons (stacked bars) assigned to each bit-width across layers $L_1$-$L_4$, separately for weights (left) and activations (right). The majority of weights across all layers and datasets remain at 1-bit, confirming the aggressive compression achieved by NMP-QAT while selectively raising precision only where training demands it. An exception is Higgs, where the first layer ($L_1$) requires higher weight precision due to the larger input feature space, slightly raising the overall average bit-width reported in Table~\ref{tab:size comp MLP}. For activations, precision is consistently higher than for weights, reflecting the dynamic nature of activation distributions discussed in Section~\ref{sec:results}, and rises most noticeably for RSS, which exhibits the largest activation dynamic range among the evaluated datasets.

Table~\ref{tab:main_results} shows the results under weights-only quantization. NMP-QAT achieves the best
performance on all regression tasks (KVS, VoD, RSS) and on the classification tasks QoE and Higgs, surpassing all baselines including the full-precision model on the three regression datasets. This is consistent with prior observations that quantization can act as a regularizer, reducing overfitting on noisy telecom workloads. On CoverType, full precision leads and NMP-QAT achieves the best quantized result ($0.933$ accuracy, $0.897$ F1), marginally outperforming PPSO ($0.930$, $0.889$) at substantially lower bit-widths. Binary and ternary QAT occasionally match or exceed higher-bit schemes, particularly on regression tasks, due to regularization effects; ternary quantization additionally benefits from zero-induced sparsity. Beyond 4-bit, quantization error is already small and additional bits may hurt generalization.

Each point in Fig.~\ref{fig:theoretical_memory_footprint_MLP}) represents one method under weights-only (Fig.~\ref{fig:theoretical_memory_footprint_MLP}a, Fig.~\ref{fig:theoretical_memory_footprint_MLP}c) and weights+activations (Fig.~\ref{fig:theoretical_memory_footprint_MLP}b, Fig.~\ref{fig:theoretical_memory_footprint_MLP}d)
quantization, for classification and regression tasks respectively.
NMP-QAT (marked by a square) consistently occupies the
low-memory/high-utility region across all datasets and both
quantization settings, achieving the best or near-best utility at
the lowest theoretical memory footprint. Competing methods such as
PPSO and OWQ either require higher memory for comparable utility or
sacrifice utility at similar compression levels.

Fig.~\ref{fig:layer_impact_NMP_QAT}) shows the effect of model depth. Accuracy and F1-score rise progressively with the number of layers for classification datasets (Fig.~\ref{fig:layer_impact_NMP_QAT}a), and normalized MSE decreases for regression datasets (Fig.~\ref{fig:layer_impact_NMP_QAT}b), confirming that NMP-QAT effectively leverages additional model capacity even under quantization constraints. This trend holds across both telecom (KVS, VoD, RSS, QoE) and non-telecom (CoverType, Higgs) datasets, suggesting that the neuron-level precision expansion mechanism scales well with model depth.

Table~\ref{tab:comp_runtime} shows the comparison of wall clock runtime. OWQ is the fastest method overall, as its post-training optimization requires no iterative forward-backward passes at scale. PPSO is the slowest by a wide margin due to its particle swarm search, requiring up to $3840$ minutes on Higgs. NMP-QAT achieves an approx $18.5\times$-$75\times$ runtime reduction over PPSO across all datasets while delivering superior accuracy and compression, confirming its practical viability for time-sensitive and resource-constrained deployments.

\begin{figure}[t]
\centering
\begin{subfigure}[t]{0.33\linewidth}
    \centering
    \includegraphics[width=\linewidth]{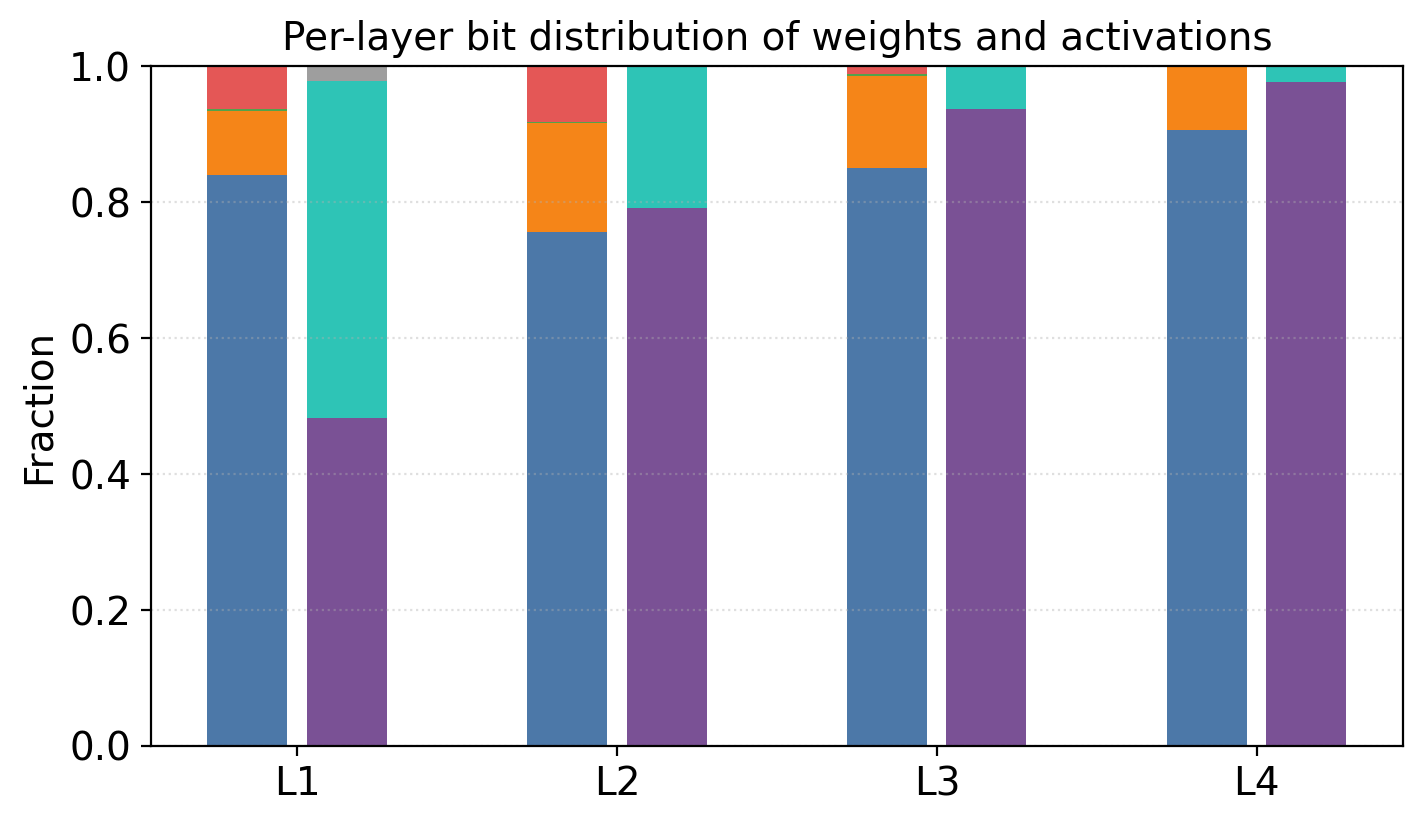}
    \caption{KVS}
\end{subfigure}
\hfill
\begin{subfigure}[t]{0.33\linewidth}
    \centering
    \includegraphics[width=\linewidth]{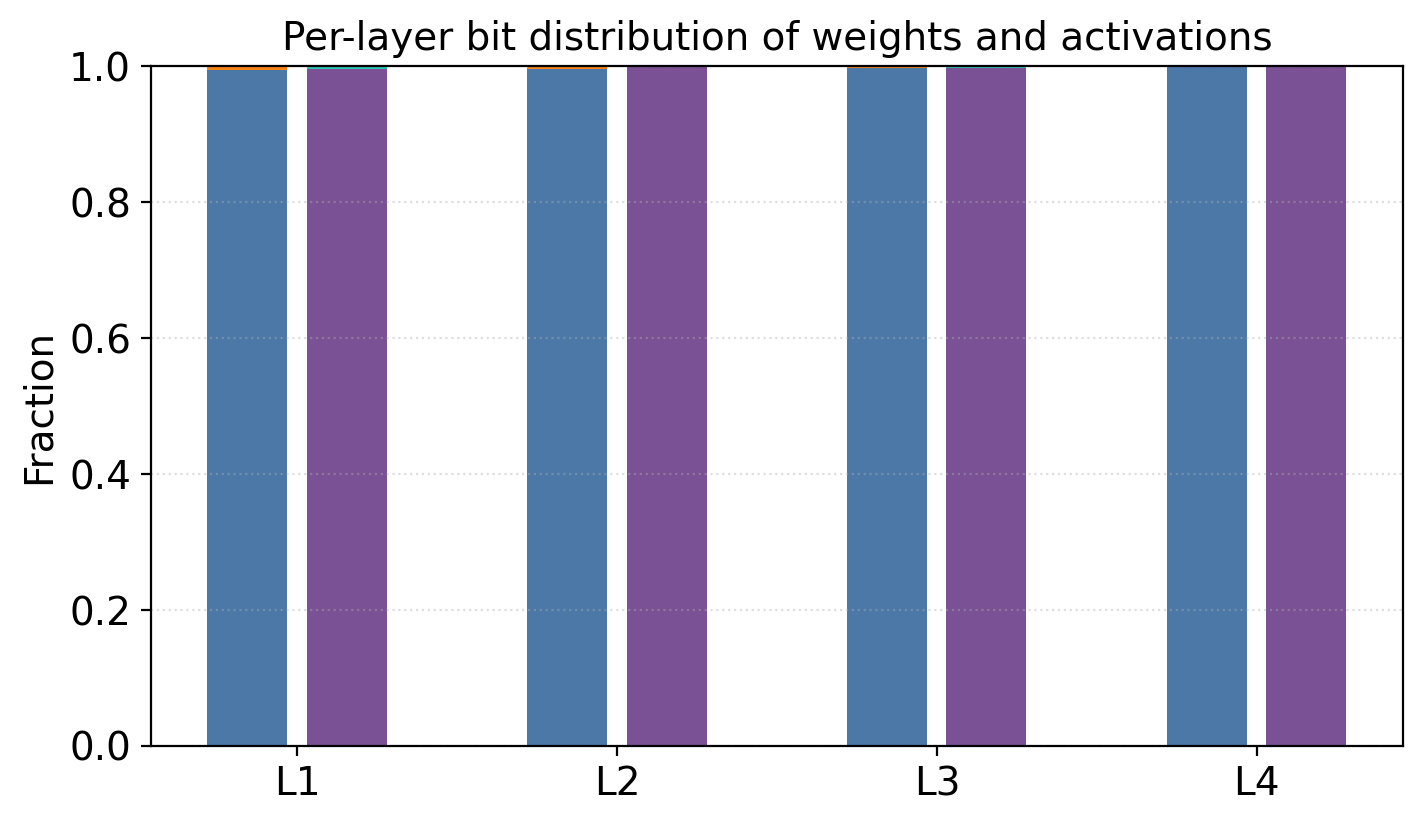}
    \caption{QoE}
\end{subfigure}
\hfill
\begin{subfigure}[t]{0.33\linewidth}
    \centering
    \includegraphics[width=\linewidth]{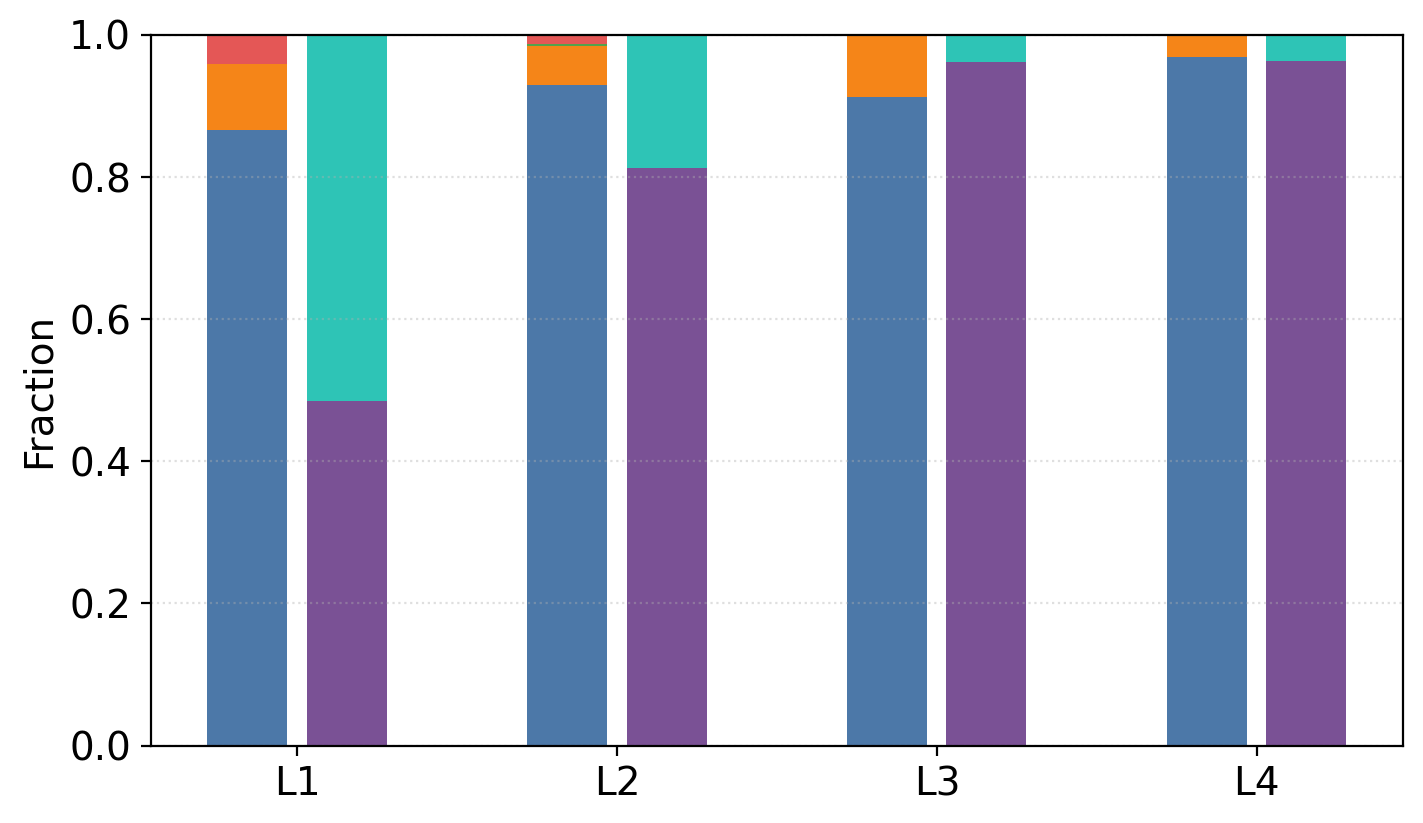}
    \caption{VoD}
\end{subfigure}

\begin{subfigure}[t]{0.33\linewidth}
    \centering
    \includegraphics[width=\linewidth]{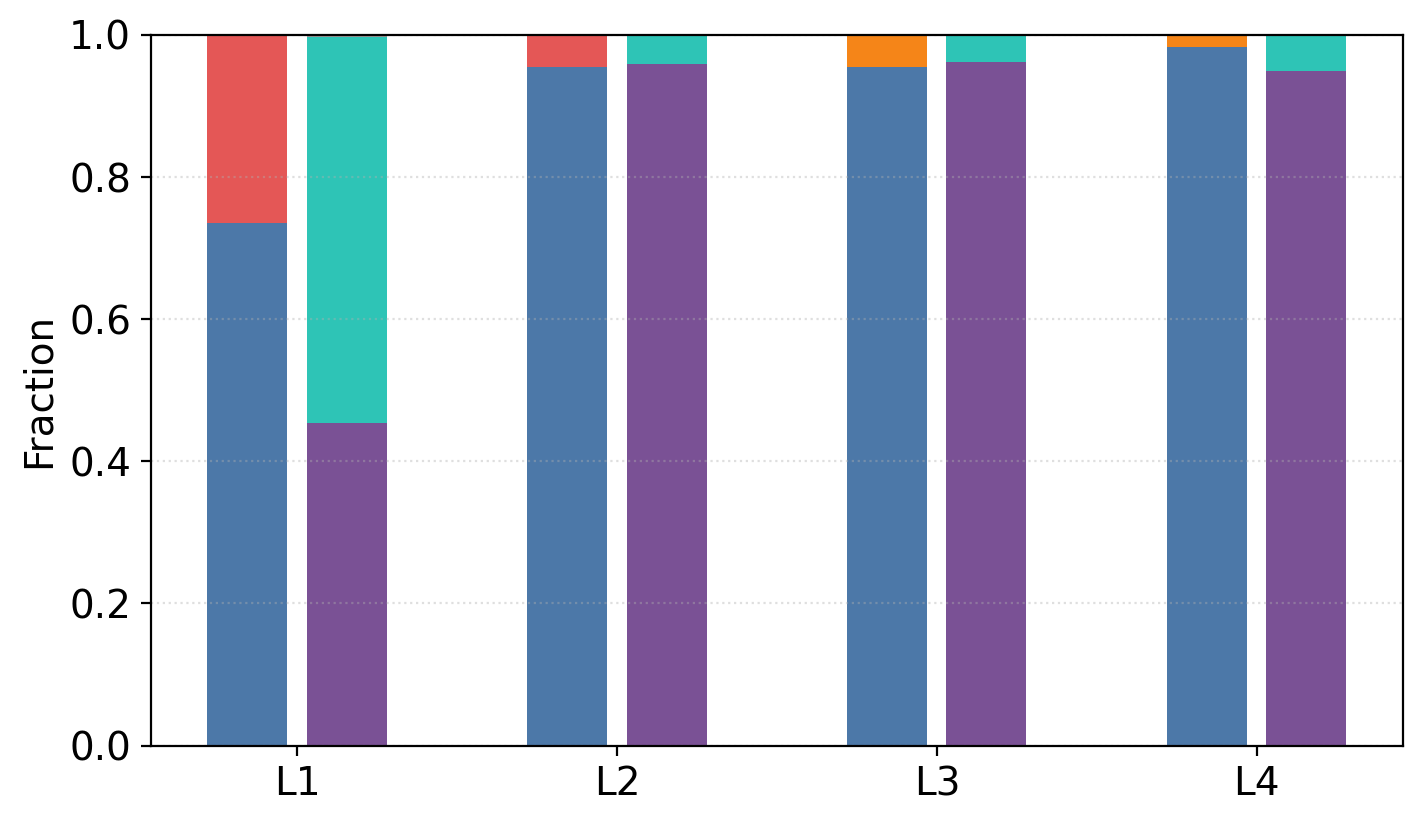}
    \caption{Covertype}
\end{subfigure}
\hfill
\begin{subfigure}[t]{0.33\linewidth}
    \centering
    \includegraphics[width=\linewidth]{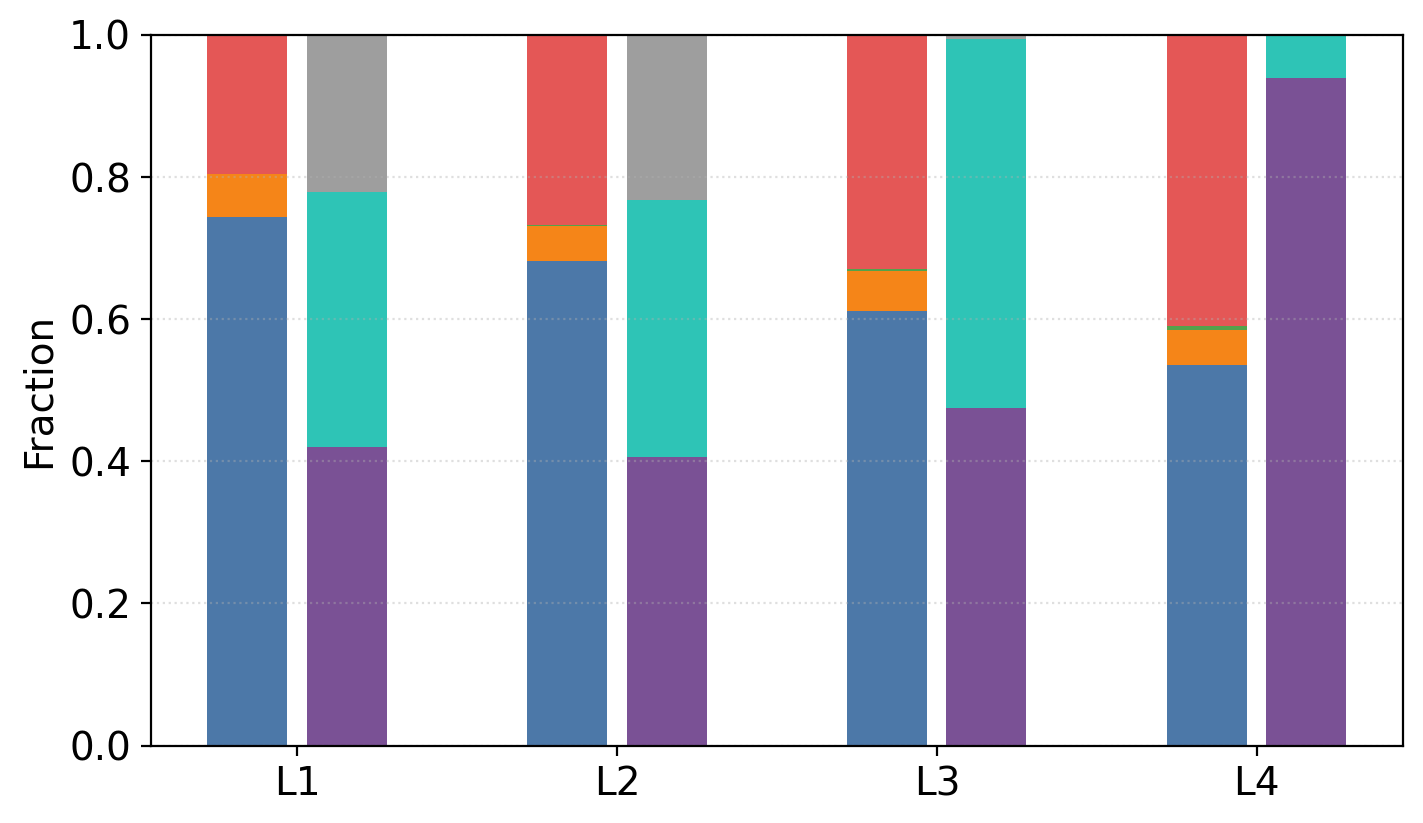}
    \caption{RSS}
\end{subfigure}
\hfill
\begin{subfigure}[t]{0.33\linewidth}
    \centering
    \includegraphics[width=\linewidth]{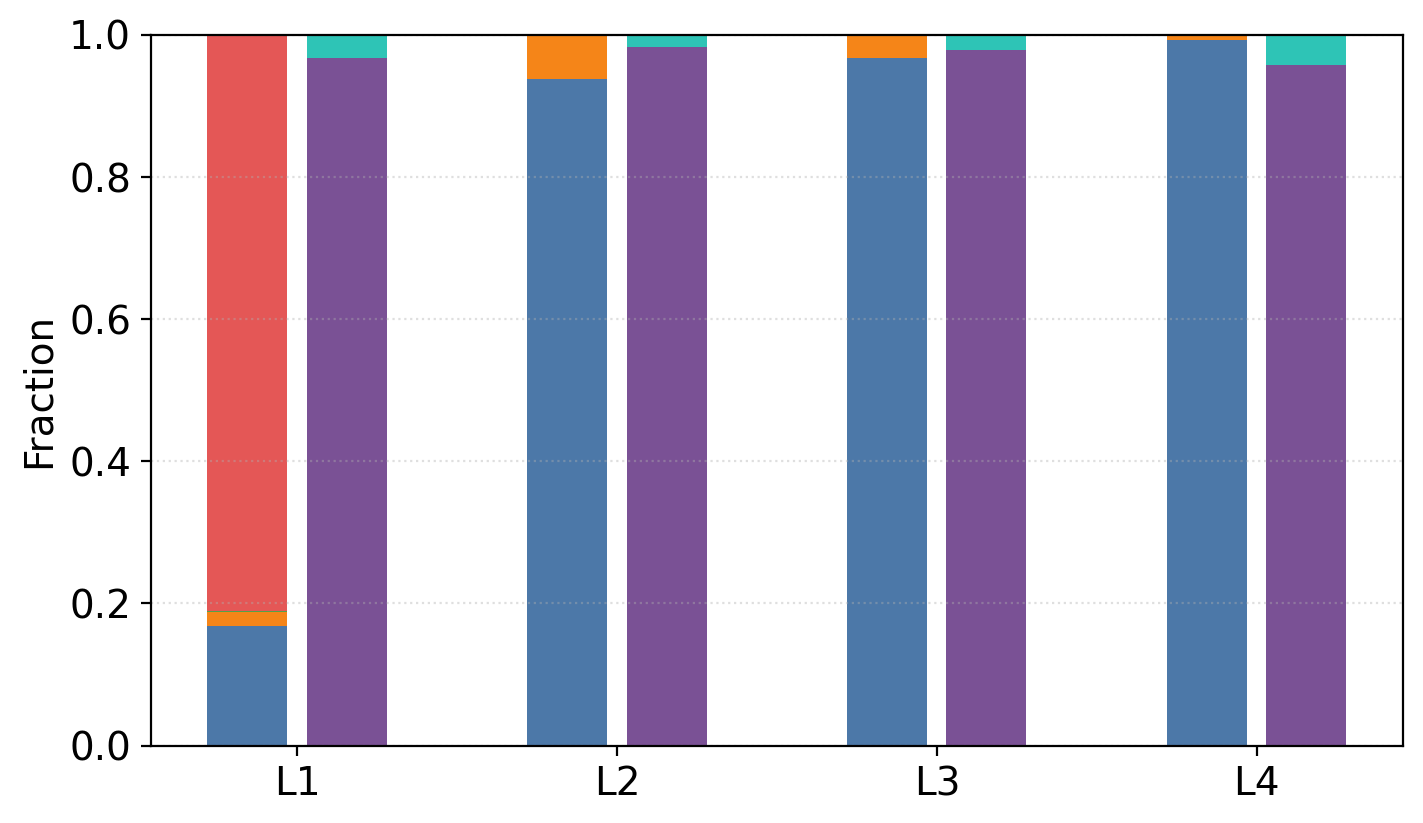}
    \caption{Higgs}
\end{subfigure}

\vspace{0.5em}

\begin{subfigure}[t]{0.65\linewidth}
    \centering
    \includegraphics[width=\linewidth]{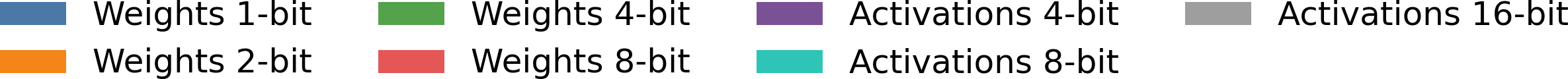}
\end{subfigure}

\caption{Per‑layer mixed‑precision distributions for weights and activations (fractions). For each layer $L_1-L4$, stacked bars shows the fraction of bit‑widths used for weights (left) and activations (right), highlighting how precision allocation differs across layers.}
\label{fig:Neuron_level_MP_MLP}
\end{figure}

\begin{table*}[!h]
\centering
\caption{Performance comparison across datasets under weight only quantization.}
\label{tab:main_results}
\scriptsize
\setlength{\tabcolsep}{3pt}
\renewcommand{\arraystretch}{1.4}

\begin{tabular}{c|c|cccc|c|c|c}
\toprule
\multirow{2}{*}{Dataset} & \multirow{2}{*}{Full Precision} & \multicolumn{4}{c|}{QAT} & \multirow{2}{*}{PPSO} & \multirow{2}{*}{OWQ} & \multirow{2}{*}{Ours} \\
\cline{3-6}
& & {1-bit} & {1.58-bit} & {4-bit} & {8-bit} & & & \\
\midrule
KVS (mse)$\downarrow$ & 5.379 & 2.292 $\pm$ 0.391 & 2.385 $\pm$ 0.062 & 2.619 $\pm$ 0.074 & 2.762 $\pm$ 0.396 & 2.498 $\pm$ 0.082 & 4.416 $\pm$ 2.392 & \textbf{2.270 $\pm$ 0.130} \\ \hline
VoD (mse)$\downarrow$ & 16.41 & 15.576 $\pm$ 0.117 & 14.854 $\pm$ 0.545 & 19.473 $\pm$ 0.353 & 19.605 $\pm$ 0.529 & 16.547 $\pm$ 0.507 & 19.888 $\pm$ 0.569 & \textbf{14.832 $\pm$ 0.164} \\ \hline
RSS (mse)$\downarrow$ & 17.62 & 8.882 $\pm$ 0.325 & 8.406 $\pm$ 0.470 & 13.347 $\pm$ 1.079 & 14.051 $\pm$ 2.732 & 8.487 $\pm$ 0.252 & 11.420 $\pm$ 0.369 & \textbf{8.317 $\pm$ 0.337}\\
\midrule
QoE (acc)$\uparrow$ & 0.7152 & 0.742 $\pm$ 0.023 & 0.749 $\pm$ 0.005 & 0.663 $\pm$ 0.034 & 0.643 $\pm$ 0.004 & 0.707 $\pm$ 0.032 & 0.522 $\pm$ 0.004 & \textbf{0.754 $\pm$ 0.02}  \\ \hline
Covertype (acc)$\uparrow$ & \textbf{0.9581} &  0.884 $\pm$ 0.005 & 0.915 $\pm$ 0.001 & 0.837 $\pm$ 0.01 & 0.854 $\pm$ 0.06 & 0.930 $\pm$ 0.002 & 0.822 $\pm$ 0.002 & \textit{0.933 $\pm$ 0.003} \\ \hline
Higgs (acc)$\uparrow$ & 0.7478 & 0.737 $\pm$ 0.001 & 0.742 $\pm$ 0.002 & 0.651 $\pm$ 0.003 & 0.695 $\pm$ 0.002 & 0.749$\pm$0.001 & 0.660 $\pm$ 0.007 & \textbf{0.767 $\pm$ 0.001} \\ \hline
QoE (f1)$\uparrow$ & 0.6205 & 0.680 $\pm$ 0.026 & 0.682 $\pm$ 0.018 & 0.393 $\pm$ 0.132 & 0.315 $\pm$ 0.004 & 0.634 $\pm$ 0.046 & 0.577 $\pm$ 0.029 & \textbf{0.665 $\pm$ 0.036}\\ \hline
Covertype (f1)$\uparrow$ & \textbf{0.937} & 0.794 $\pm$ 0.042 & 0.839 $\pm$ 0.013 & 0.739 $\pm$ 0.02 & 0.7310 $\pm$ 0.039 & 0.889 $\pm$ 0.001 & 0.722 $\pm$ 0.005 & \textit{0.897 $\pm$ 0.002} \\ \hline
Higgs (f1)$\uparrow$ & 0.7477 & 0.737 $\pm$ 0.001 & 0.742 $\pm$ 0.002 & 0.650 $\pm$ 0.002 & 0.695 $\pm$ 0.002 & 0.749$\pm$0.001 & 0.663 $\pm$ 0.006 & \textbf{0.767 $\pm$ 0.001}\\

\bottomrule
\end{tabular}
\end{table*}

\begin{figure}[t]
\centering
\begin{subfigure}[t]{0.4\linewidth}
    \centering
    \includegraphics[width=\linewidth]{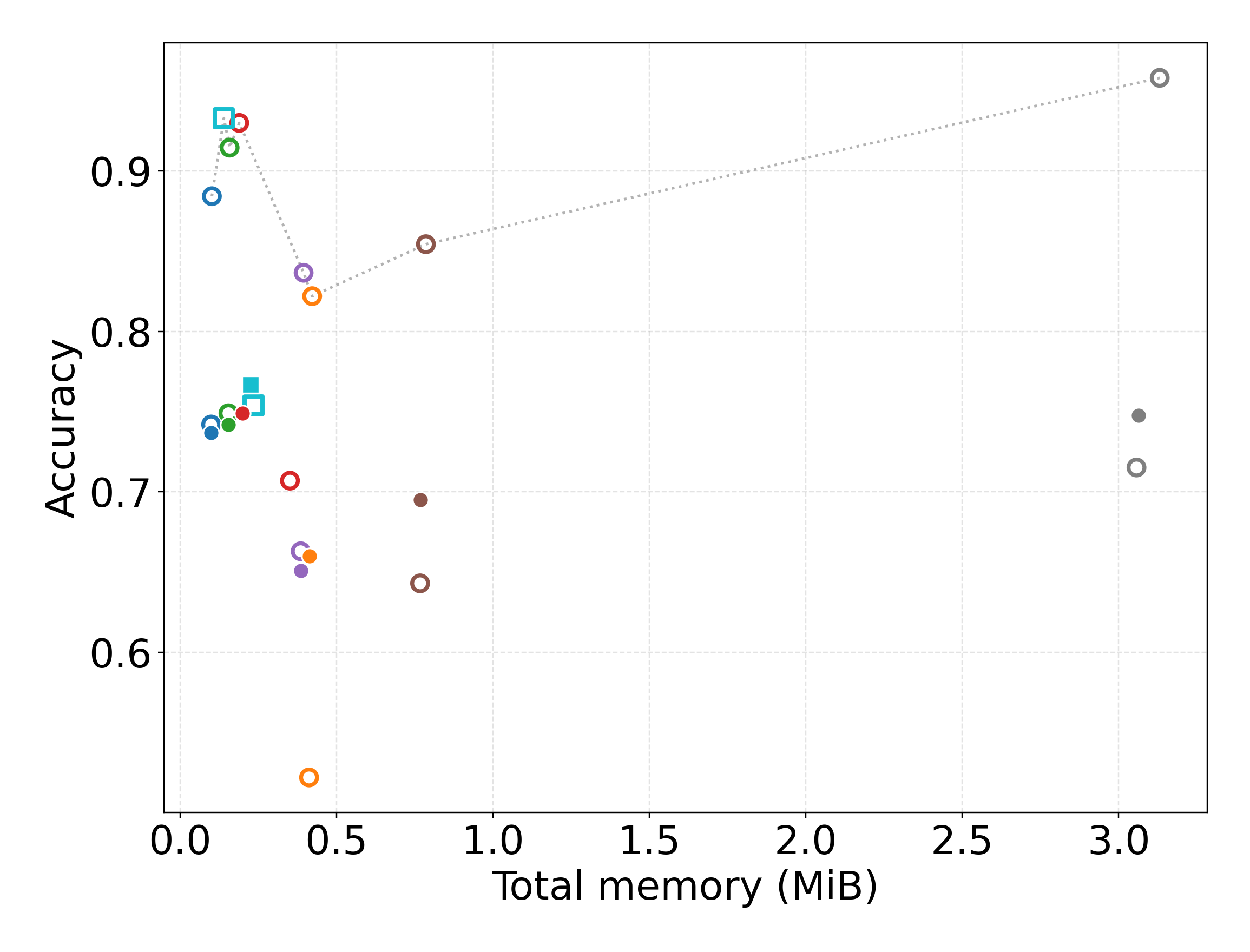}
    \caption{ \small Classification w-o quantization}
\end{subfigure}
\begin{subfigure}[t]{0.4\linewidth}
    \centering
    \includegraphics[width=\linewidth]{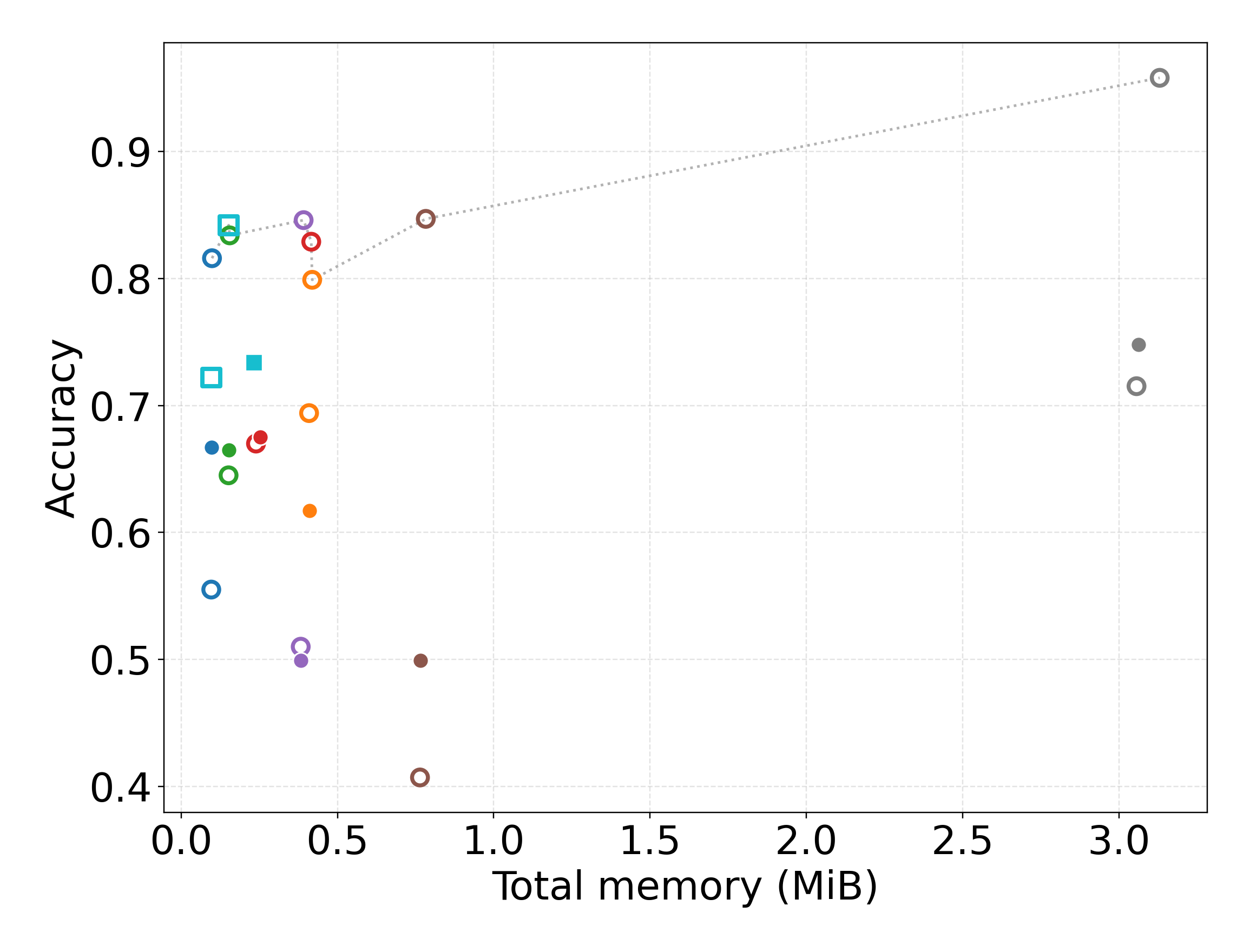}
    \caption{Classification w+act quantization}
\end{subfigure}

\begin{subfigure}[t]{0.4\linewidth}
    \centering
    \includegraphics[width=\linewidth]{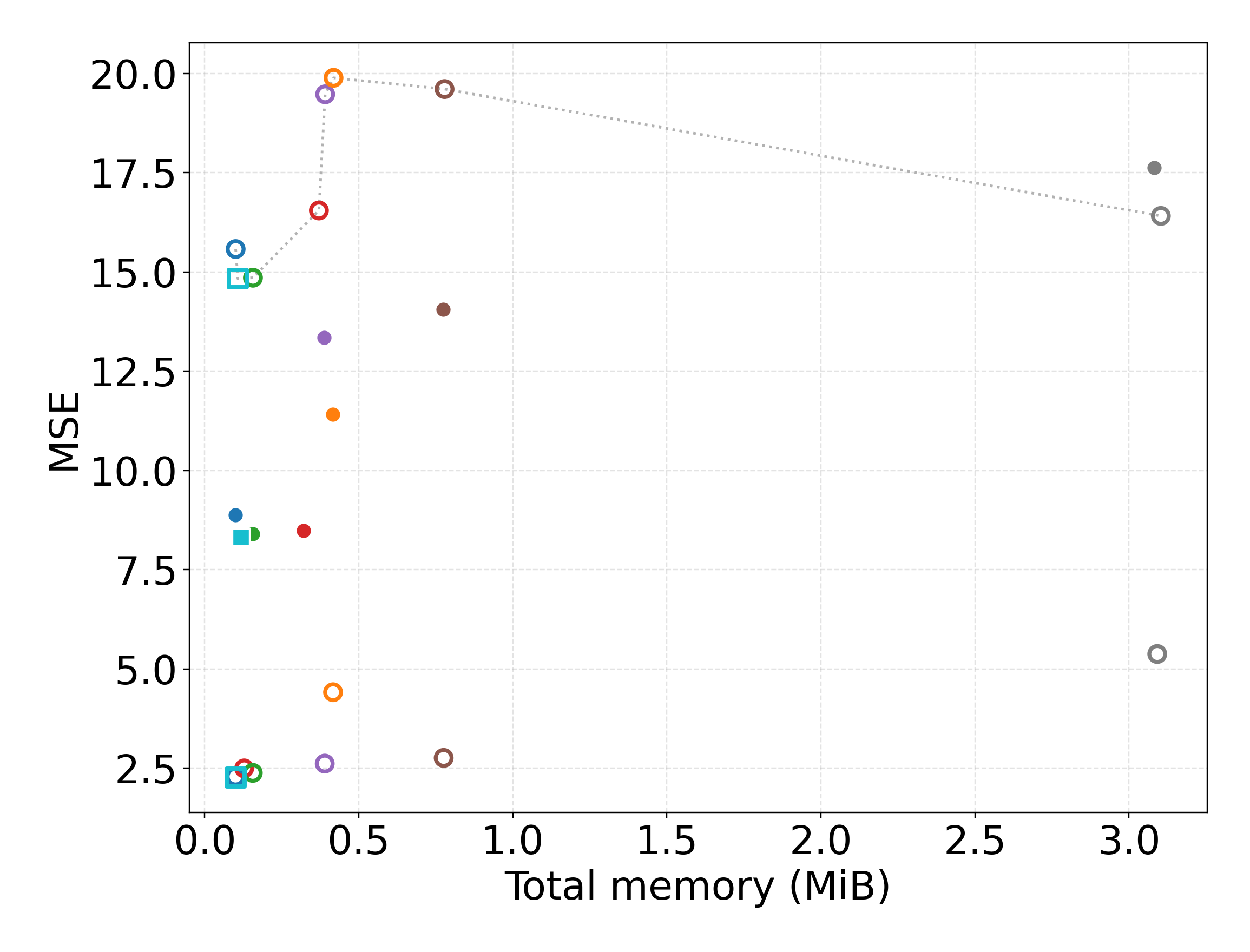}
    \caption{Regression w-o quantization}
\end{subfigure}
\begin{subfigure}[t]{0.4\linewidth}
    \centering
    \includegraphics[width=\linewidth]{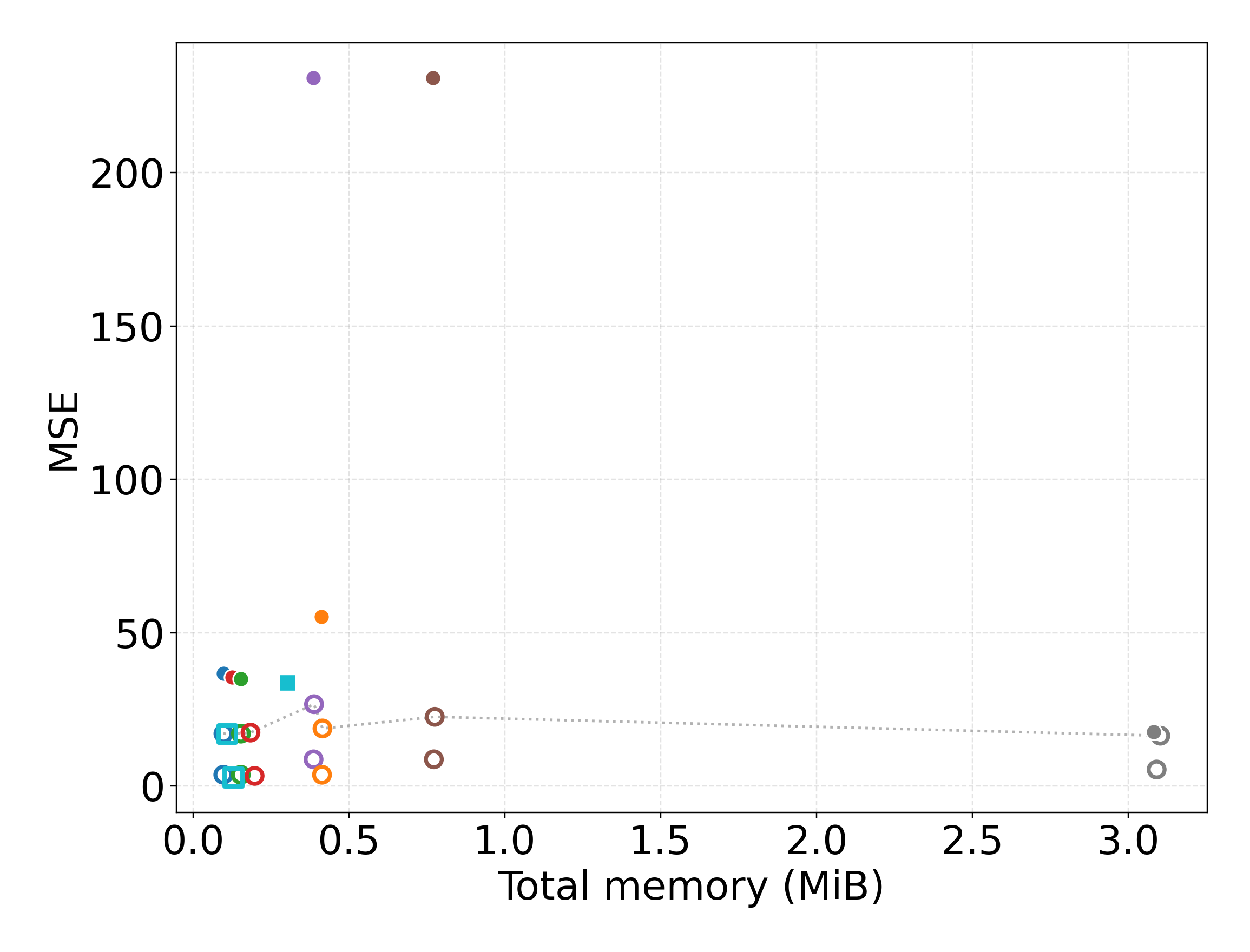}
    \caption{Regression w+act quantization}
\end{subfigure}

\vspace{0.5em}

\begin{subfigure}[t]{0.48\linewidth}
    \centering
    \includegraphics[width=\linewidth]{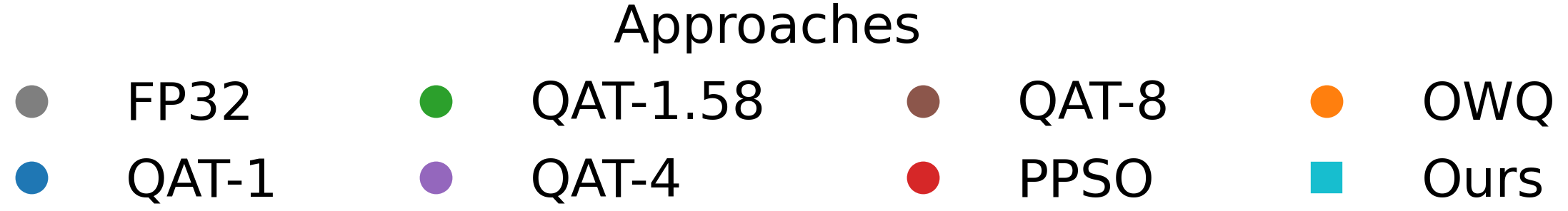}
\end{subfigure}

\caption{Memory-utility trade-off for a 4×512 MLP. (a) Weights-only and (b) weights+activations quantization. Each point represents one method (FP32, QAT-1/1.58/4/8, PPSO, OWQ, Ours); Ours is marked by a square. The x-axis shows total theoretical memory (MiB) and the y-axis shows utility (accuracy or MSE). NMP-QAT consistently occupies the low-memory/high-utility region.}
\label{fig:theoretical_memory_footprint_MLP}
\end{figure}

\begin{figure}[!ht]
\centering
\begin{subfigure}[t]{0.4\linewidth}
    \centering
    \includegraphics[width=\linewidth]{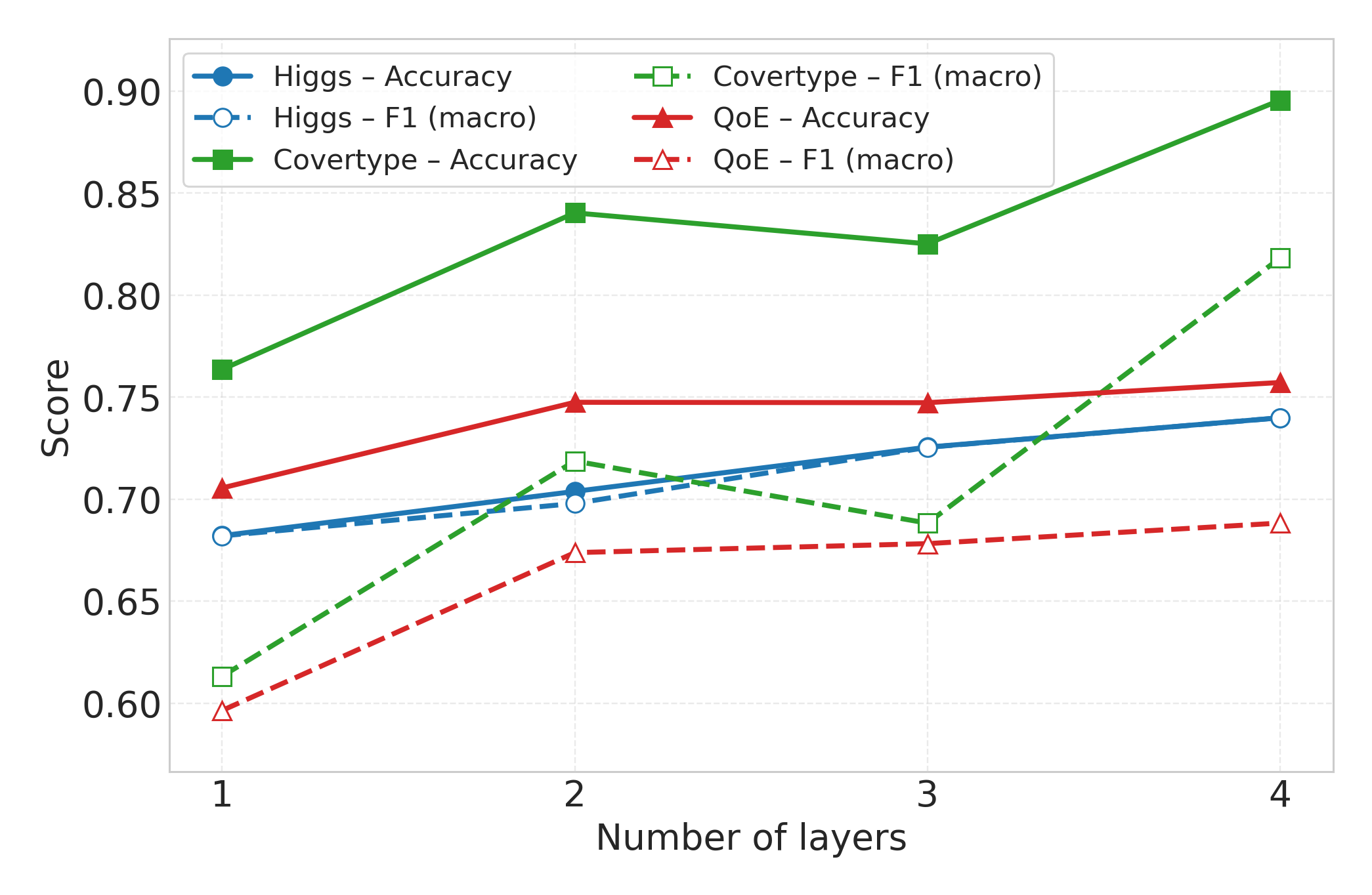}
    \caption{}
\end{subfigure}
\begin{subfigure}[t]{0.4\linewidth}
    \centering
    \includegraphics[width=\linewidth]{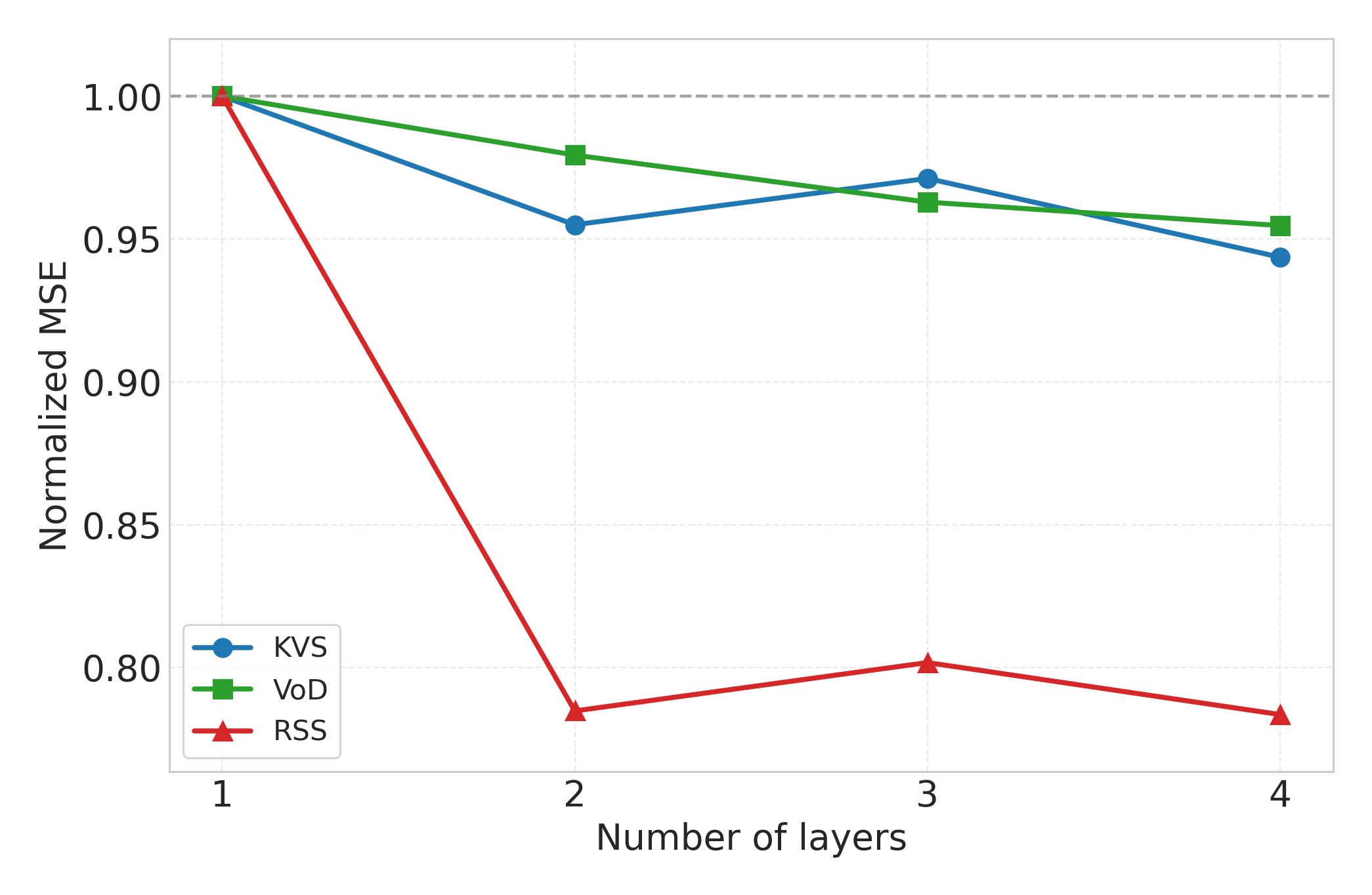}
    \caption{}
\end{subfigure}

\caption{Effect of model depth on NMP-QAT performance. (a) Accuracy and F1-score vs. number of layers for classification datasets. (b) Normalized MSE (relative to 1-layer baseline) for regression datasets. Deeper models consistently improve predictive quality.}
\label{fig:layer_impact_NMP_QAT}
\end{figure}

\begin{table*}[!ht]
\centering  
\caption{Comparison of wall-clock (in min:sec format) runtime with Tabformer model for mixed-precision approaches.}
\label{tab:comp_runtime}
\begin{tabular}{c|c|c|c|c|c|c}
\toprule
Approach & KVS & VoD & RSS & QoE & Covertype & Higgs  \\
\midrule
OBQ & 0:50 & 1:55 & 3:58 & 0:10 & 40:37 & 67:01 \\ \hline
PPSO & 189:53 & 257:19 & 406:32 & 120:13 & 1222:45 & 3840:22 \\ \hline
Ours & 4:45 & 6:25 & 22:04 & 1:35 & 72:57 & 90:09 \\
\bottomrule
\end{tabular}
\end{table*}

\end{document}